\definecolor{cvprblue}{rgb}{0.21,0.49,0.74}
\newcommand{\rr}[1]{\mathbb{R}^{#1}}
\title{Text Embedding is Not All You Need: Attention Control for Text-to-Image Semantic Alignment with Text Self-Attention Maps}
\author{Jeeyung Kim$^*$, Erfan Esmaeili\thanks{Equal contribution.}\;, and Qiang Qiu\\
Purdue University\\
{\tt\small \{jkim17, efakhabi, qqiu\}@purdue.edu}
}
\newcommand{\He}{{H_\text{e}}}
\newcommand{\De}{{D_\text{e}}}
\newcommand{\Hc}{{H_\text{c}}}
\newcommand{\Dc}{{D_\text{c}}}
\newcommand{\Nc}{{N_\text{c}}}
\newcommand{\bos}{\texttt{\footnotesize<bos>\,}}
\newcommand{\ord}[1]{\mathcal{O}(#1)}
\newtheorem{assumption}{Assumption}
\newtheorem{propx}{Proposition}
\newtheorem{lemmax}{Lemma}
\newenvironment{assumptionx}[1][]{%
  \begin{assumption}[#1]%
}{\hfill$\Box$\end{assumption}}
\newenvironment{prop}[1][]{%
  \begin{propx}[#1]%
}{\hfill$\Box$\end{propx}}
\newenvironment{lemma}[1][]{%
  \begin{lemmax}[#1]%
}{\hfill$\Box$\end{lemmax}}
\newcommand{\xmark}{\ding{55}}
\begin{document}
\maketitle
\begin{abstract}
In text-to-image diffusion models, the cross-attention map of each text token indicates the specific image regions attended. Comparing these maps of syntactically related tokens provides insights into how well the generated image reflects the text prompt. For example, in the prompt, “a black car and a white clock”, the cross-attention maps for “black” and “car” should focus on overlapping regions to depict a black car, while “car” and “clock” should not.
Incorrect overlapping in the maps generally produces generation flaws such as missing objects and incorrect attribute binding. 
Our study makes the key observations investigating this issue in the existing text-to-image models:
(1) the similarity in text embeddings between different tokens---used as conditioning inputs---can cause their cross-attention maps to focus on the same image regions; and (2) text embeddings often fail to faithfully capture syntactic relations already within text attention maps. As a result, such syntactic relationships can be overlooked in cross-attention module, leading to inaccurate image generation. To address this, we propose a method that directly transfers syntactic relations from the text attention maps to the cross-attention module via a test-time optimization.
Our approach leverages this inherent yet unexploited information within text attention maps to enhance image-text semantic alignment across diverse prompts, without relying on external guidance. 
\end{abstract}

\section{Introduction}
\label{sec:intro}
Recent advancements in diffusion models enable generating images based on various text prompts. However, semantic discrepancies often arise between the text and generated images, raising problems such as missing objects---where certain elements are overlooked---and attribute mis-binding---where attributes are incorrectly assigned to subjects. 
\begin{figure}
    \centering
    \includegraphics[width=0.99\linewidth]{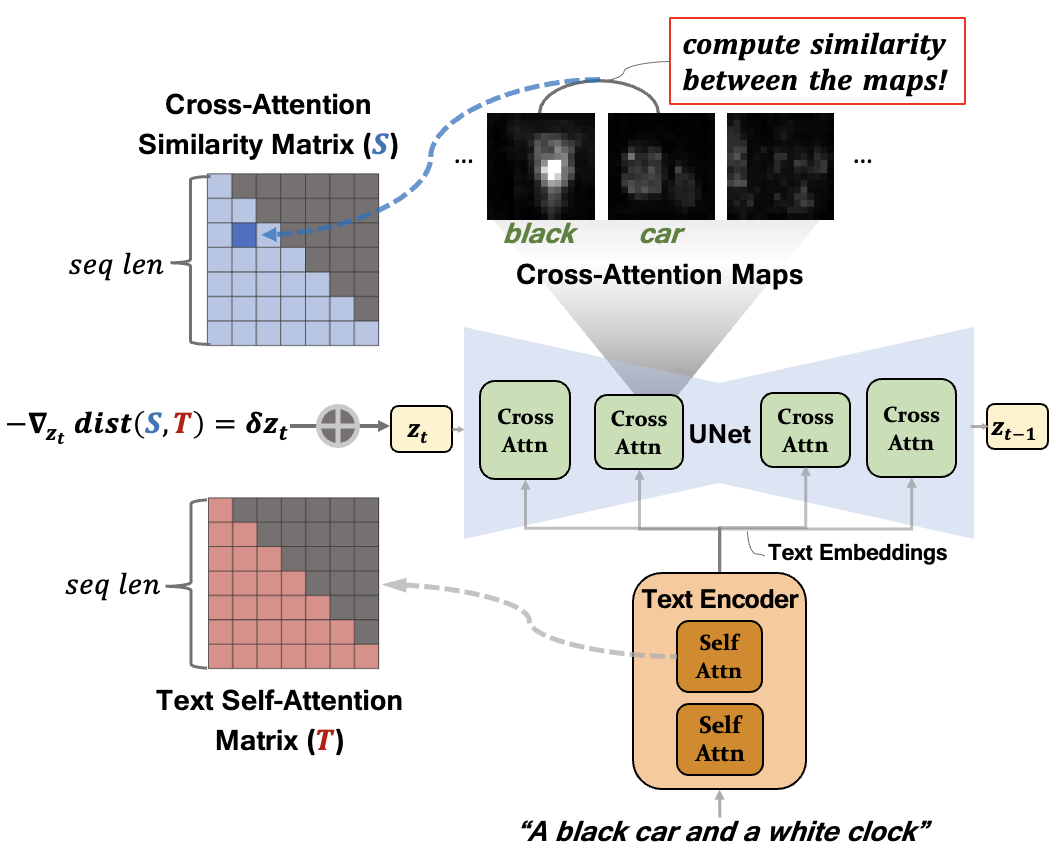}
    \caption{The overview of our method. We leverage text self-attention matrix and optimize the latent noise ($z_t$) by minimizing the distance between the cross-attention similarity matrix ($\mathsf{S}$) and the text self-attention matrix ($\mathsf{T}$). This encourages integrating syntactic relationships into text-to-image diffusion models.}
    \label{fig:method_overview}
    \vspace{-3mm}
\end{figure}

Prior studies~\cite{hertz2022prompt} demonstrated that 
 the cross-attention map of each token in text-to-image (T2I) diffusion models highlights the attended regions in the image and provides clues about the spatial placement of elements corresponding to the tokens. In particular, \cite{meral2024conform, rassin2024linguistic, agarwal2023star} suggest that the spatial alignment in cross-attention maps among related words influences the fidelity of images to the text prompts, as we also demonstrate in Section~\ref{sec:empirical}. For instance, in the prompt \textit{a black car and a white clock}, if the cross-attention maps for \textit{car} and \textit{clock} overlap excessively, unique token contributions can dilute, potentially omitting one object. Conversely, if the maps for \textit{black} (or \textit{white}) and \textit{car} (or \textit{clock}) diverge too much, attribute mis-binding can occur. This implies syntactically related words should ideally have spatially aligned cross-attention maps, as discussed in \cite{rassin2024linguistic, feng2022training}. However, the factors determining this spatial alignment between the maps remain poorly understood.

\begin{figure*}[ht]
    \centering
    \vspace{-2mm}
    \begin{subfigure}[b]{0.27\textwidth}
        \centering
        \includegraphics[width=\textwidth]{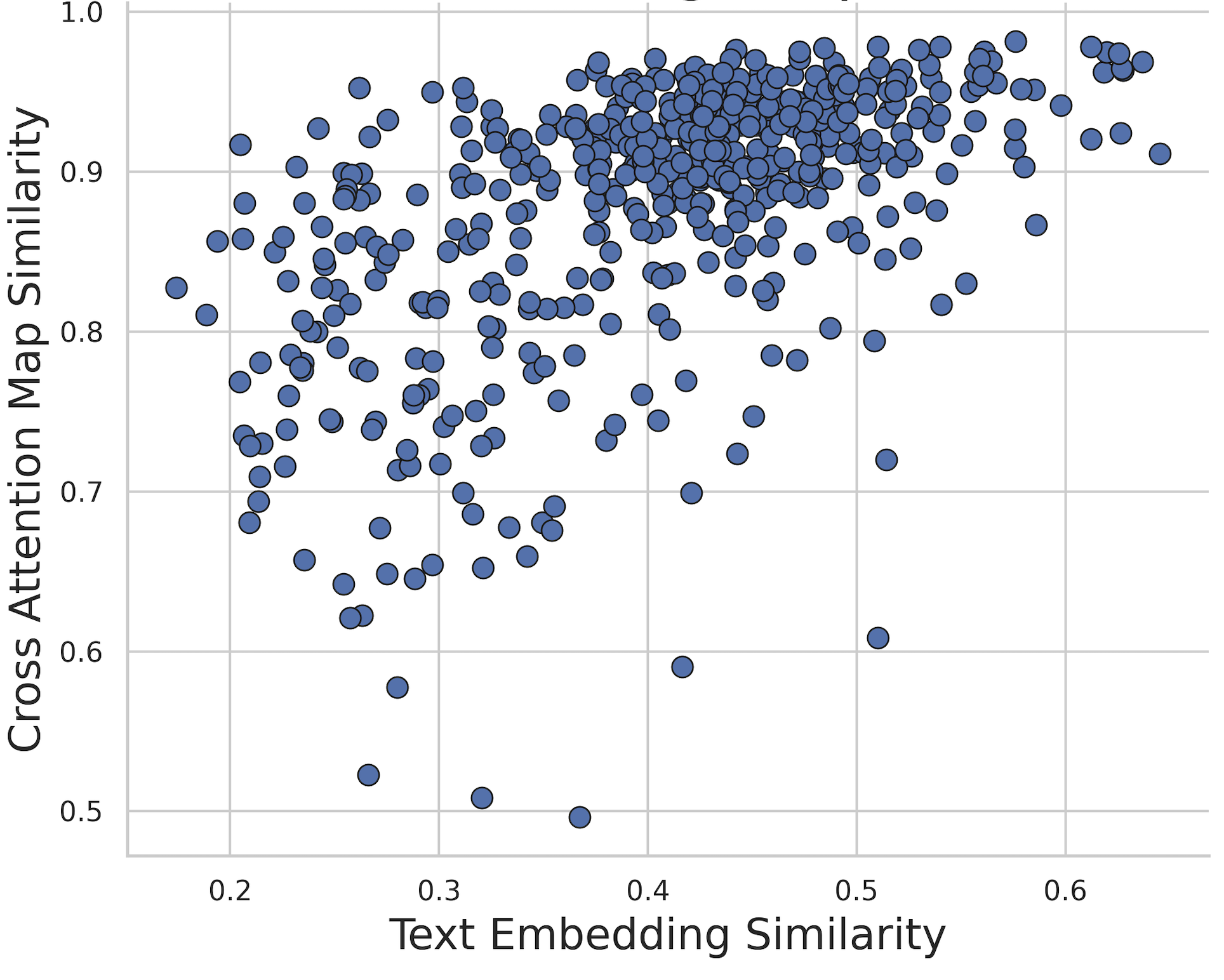} 
        \caption{}
        \label{fig:analysis_sub1}
    \end{subfigure}
    \hfill
    \begin{subfigure}[b]{0.26\textwidth}
        \centering
        \includegraphics[width=\textwidth]{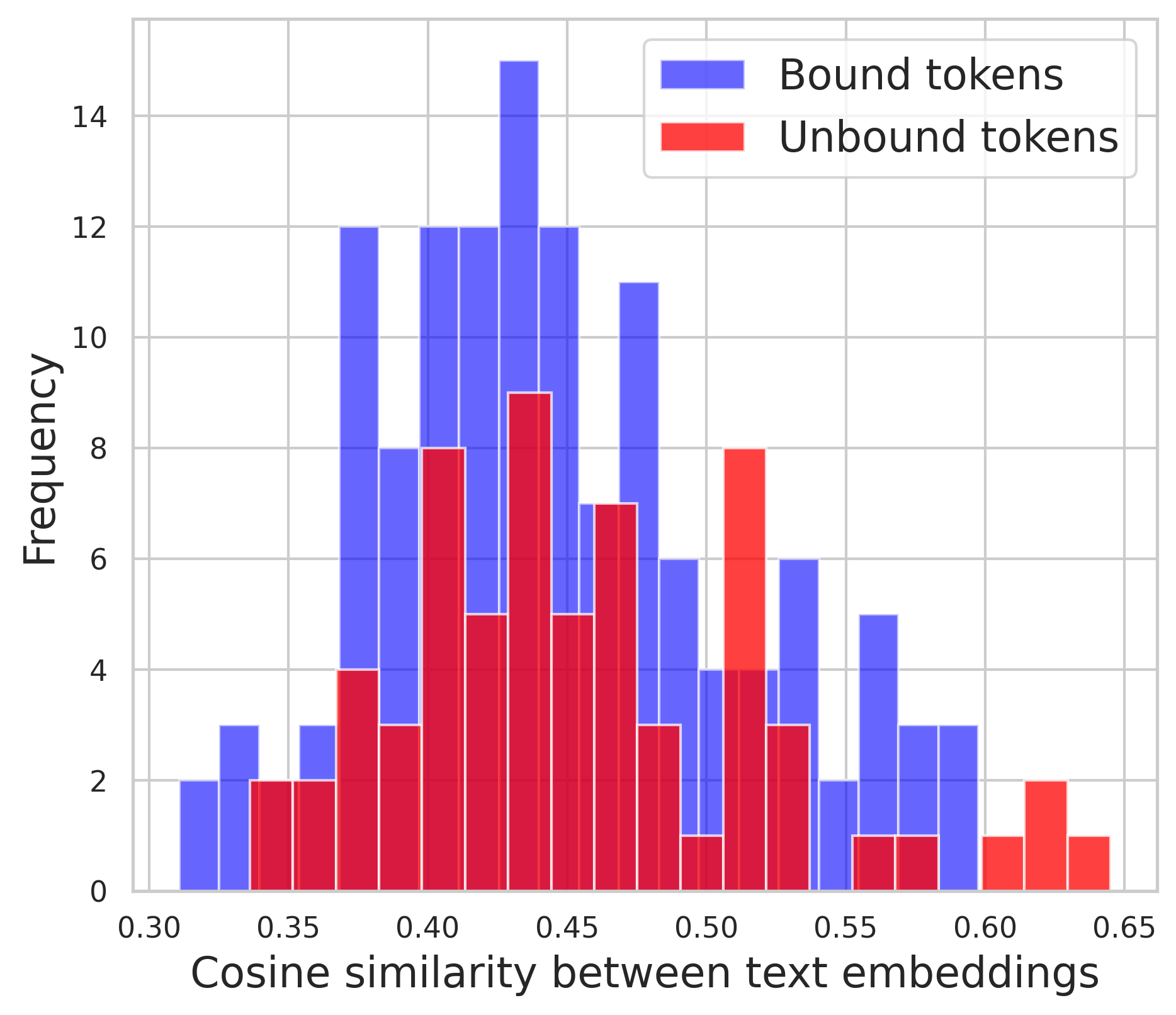} 
        \caption{}
        \label{fig:analysis_sub2}
    \end{subfigure}
    \hfill
    \hspace{1mm}
    \begin{subfigure}[b]{0.45\textwidth}
        \centering
        \includegraphics[width=\textwidth]{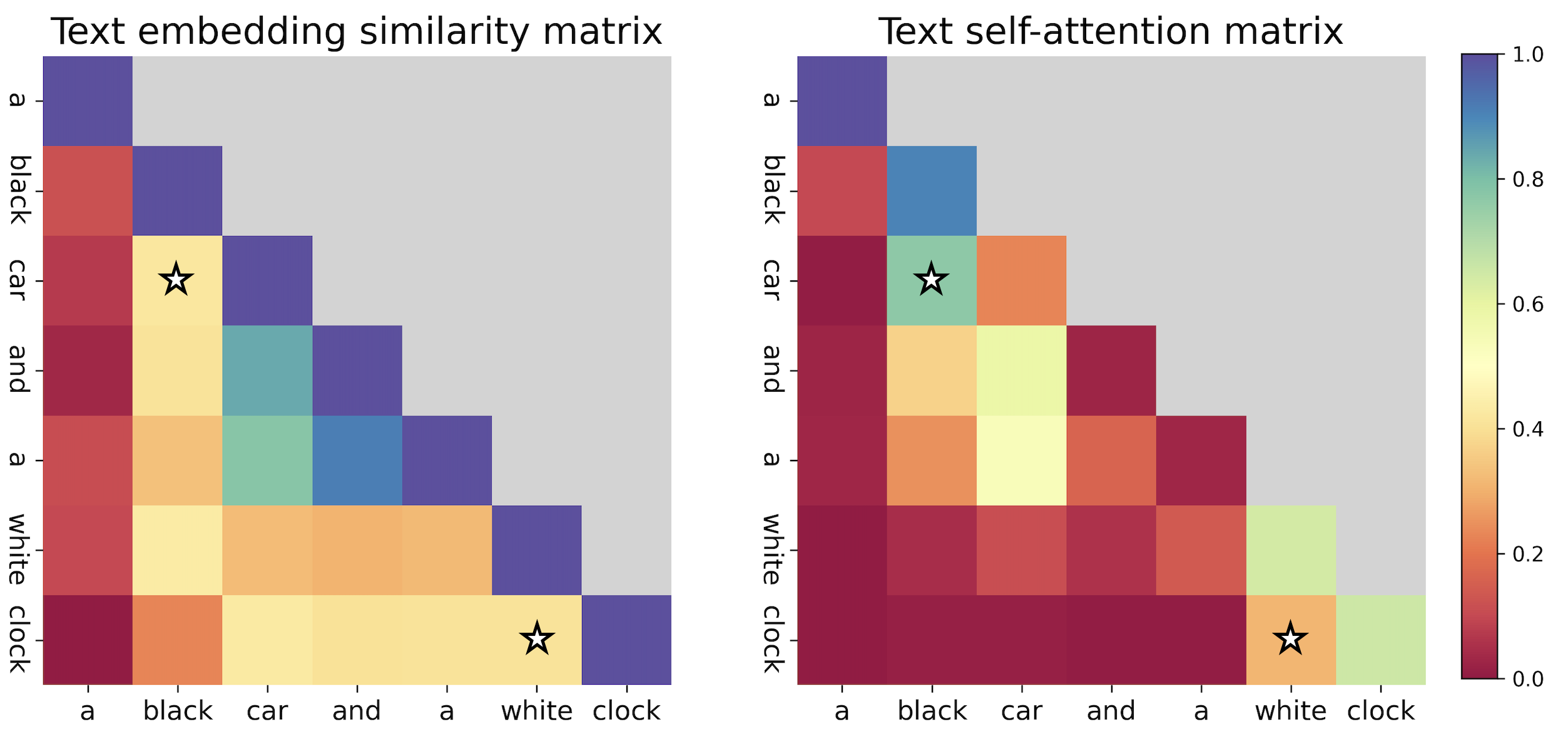}
        \caption{}
        \label{fig:analysis_sub3}
    \end{subfigure}
    \caption{For the analysis, we use the prompt sets from \cite{chefer2023attend}, structured as  ``[$\text{attribute}_1$]\; [$\text{object}_1$] and [$\text{attribute}_2$]\;[$\text{object}_2$]'', ``[$\text{object}_1$] and/with [$\text{object}_2$]'' or ``[$\text{object}_1$]  and [$\text{attribute}_2$]\;[$\text{object}_2$]''. (a) Comparison of the cosine similarity of text embeddings with that of the corresponding cross-attention maps at denoising step 1, with pairs of tokens ($\text{object}_i$, $\text{object}_j$), where $i \neq j$, and pairs of tokens ($\text{attribute}_m, \text{object}_n$) for both $m = n$ and $m \neq n$. As text embeddings become more similar, their cross-attention maps get similar. (b) The distributions of text embedding similarity between i) \textit{Bound tokens}--- ($\text{attribute}_i, \text{object}_i$) for $i=1,2$, and ii) \textit{Unbound tokens}---($\text{object}_1, \text{object}_2$). The distributions show no discernible difference, indicating text embeddings do not effectively represent the syntactic relationships. (c) Comparison of text embedding similarity (left) and the text self-attention map power by 3 (right) for the prompt \textit{a black car and a white clock}. In the self-attention maps ($\mathsf{T}$), \textit{clock} attends more to \textit{white}, unlike the text embeddings.}
    \label{fig:three_subfigures}
    \vspace{-2mm}
\end{figure*}

In our study, we first investigate the factors that contribute to the spatial alignment of cross-attention maps across different tokens, which can ultimately affect the accuracy of T2I generation. We reveal that text embeddings, which function as keys in the cross-attention modules, play a pivotal role in determining the similarity of these maps, as illustrated in Figure~\ref{fig:analysis_sub1}. Specifically, as the text embeddings for different words become more similar, the cosine similarity of their corresponding cross-attention maps increases. This effect is noticeable from the initial denoising step.

Considering that (i) cross-attention maps should capture syntactic relationships between words and that (ii) the spatial alignment of cross-attention maps is influenced by the text embeddings, we question whether text embeddings accurately capture the linguistic structure in text prompts. 
Our findings suggest they do not, as illustrated in Figure~\ref{fig:analysis_sub2} and the left of Figure~\ref{fig:analysis_sub3}. Syntactically related words in prompts (\eg, \textit{black}-\textit{car} and \textit{white}-\textit{clock}) do not necessarily yield similar text embeddings, as detailed in Section~\ref{sec:empirical}. This finding is further supported by \cite{yuksekgonul2022and}, which argues that CLIP~\cite{radford2021learning} often behaves like a bag-of-words model. As a result, the cross-attention maps, derived from text embeddings, can likely fail to faithfully reflect the syntactic relationships.
In the end, the T2I diffusion models can often generate images that inaccurately represent the prompts. 

To address the above issue, prior studies \cite{rassin2024linguistic, feng2022training, agarwal2023star, meral2024conform} resort to external sources to obtain syntactic information within text prompts and regularize cross-attention maps to incorporate these relationships. Specifically, \cite{rassin2024linguistic, feng2022training} employ external text parsers to obtain linguistic structure, while \cite{agarwal2023star, meral2024conform} rely on human intervention to manually group tokens based on syntactic relationships. However, these methods are limited by their dependence on external inputs.

\emph{Do we really need to rely on external sources to obtain syntactic relations within a sentence?}
Remarkably, T2I diffusion models inherently capture syntactic relationships within prompts through the self-attention maps in their text encoder. In the encoder’s self-attention module, each token pays more attention to related words, leading to encoding the entire sentence effectively. 
As shown in the right of Figure~\ref{fig:analysis_sub3}, the text self-attention maps exhibit higher attention scores between the syntactically related words. 
However, this information is weakly encoded in the text encoder outputs (\ie, text embeddings). We conjecture this is due to the text attention module's strong focus on the \texttt{<bos>} token, known as \textit{attention sink}~\cite{sun2024massive, xiao2023efficient}, which minimizes the influence of other tokens, as further discussed in Section \ref{sec:empirical}.

As illustrated in Figure~\ref{fig:method_overview}, our approach reuses the self-attention maps from the text encoder and directly transfers the syntax information to the diffusion models.
We first define a similarity matrix whose values indicate the similarity between pairs of cross-attention maps.
Then, we update the latent noise in diffusion models to minimize the distance between the cross-attention similarity matrix and the text self-attention matrix during inference. 
This allows for the seamless integration of syntactic relationships into the diffusion process.
By guiding cross-attention modules to better capture contextual relationships, our method ultimately produces images accurately reflecting the intended meaning of prompts.
We leverage overlooked information already embedded in T2I models, offering two key advantages: \textbf{(i) Self-contained}: no need for external inputs like text parsers or manual token indices; and \textbf{(ii) Generalizable}: effective across diverse sentence structures.

\section{Related Work}
 \noindent\textbf{Text-to-image generation.}
Text-to-image (T2I) generation \cite{ramesh2022hierarchical,podell2023sdxl,esser2024scaling,chen2023pixart,chen2024pixart,saharia2022photorealistic} is commonly based on latent diffusion models \cite{rombach2022high} where text data is processed via a text encoder \cite{radford2021learning,raffel2020exploring}. 
An important problem in T2I models is the lack of full correspondence between the input text and the image. Although diverse methods \cite{agarwal2024alignit,feng2022training,hu2024ella,yu2024uncovering,shen2024rethinking,qi2024not,manas2024improving} are proposed to diagnose and fix this problem, a common theme in prior works is adjusting the cross-attention maps in inference time, following the seminal work \cite{hertz2022prompt}.\\ 
 \noindent\textbf{Cross-attention control for improved sematic alignment.} Several defects in cross-attention maps hinder sematic alignment, including attention dominance \cite{zhang2024enhancing}---one token getting huge attention weight, and attention leakage/overlap \cite{yang2023dynamic,marioriyad2024attention}---attention weights not respecting the spatial boundaries of the intended objects. In particular, one line of work employs contrastive objectives to guide attention optimization \cite{agarwal2023star,meral2024conform,rassin2024linguistic,jiang2024mc}. For example, {\textit{\footnotesize CONFORM}} \cite{meral2024conform} uses prompts with distinct groups, where tokens in the same (opposite) group are treated as positive (negative) pairs. Similarly, \cite{rassin2024linguistic} uses text parser models to identify such pairs. Other approaches, like those in \cite{wang2023tokencompose, kim2023dense}, optimize attention maps with spatial guidance (e.g., segmentation maps, masks, or layouts). However, these methods rely on external resources for text-to-image generation.

 \noindent\textbf{Attention sinks.} The attention weights in pretrained transformers tend to be heavily biased towards special tokens such as \texttt{<bos>}, \texttt{<eos>}, and punctuation tokens, as studied in (vision-)language models \cite{xiao2023efficient, gu2024attention, sun2024massive}. The T2I diffusion model literature \cite{chefer2023attend, yi2024towards} also notes the focus of attention scores on the \texttt{<bos>} token in the CLIP text encoder.
\section{Preliminaries}
\label{sec:prelim}
In this section, we briefly review the structure of text-to-image models, Stable Diffusion~\cite{rombach2022high} (SD), including the text encoder and the cross-attention module.

\noindent\textbf{Text encoder.} To condition the text data for the diffusion process, a text encoder is needed. The input text is first tokenized and then converted into dense vectors.
The dense vectors are then processed via a series of multi-head self-attention layers \cite{vaswani2017attention}. 

At a given layer $\ell$ in the text encoder, let us denote the \textit{key} by  $\mathbf{e}^{(\ell)}_i\in \mathbb{R}^{\He\De}, i =1,\cdots, s$, where $s$ is the key sequence length, $\De$ is the embedding dimension per head and $\He$ is the number of heads in the text encoder. The corresponding self-attention matrix $T^{(\ell,h)}\in\mathbb{R}^{s\times s}$ is given by
\begin{align}\label{text sa matrix}
    T^{(\ell,h)}_{ij}=\frac{\exp(\omega_{ij})}{\sum_{k}\exp(\omega_{ik})},\;
    \omega_{ij}:=\mathbf{e}^{(\ell)\top}_iW^{(\ell, h)}_\text{en }\mathbf{e}^{(\ell)}_j,
\end{align}
where $W^{(\ell, h)}_{\text{en}}\in \mathbb{R}^{\He\De\times \He\De}$ is a pretrained matrix at head $h$ in layer $\ell$ of the text encoder.

In our proposed method, we use the self-attention matrix averaged over layers and heads:
\begin{equation}
\label{eq:text_self_map}
    \mathsf{T'} =\frac{1}{L_{\text{e}}\He}\sum_{\ell=1}^{L_{\text{e}}}\sum_{h=1}^{\He}T^{(\ell,h)}.
\end{equation}
Due to the high attention probabilities assigned to \bos and {\texttt{\footnotesize<eos>\,}} tokens, we remove their corresponding values and re-normalize each row as follows:
\begin{equation}
\label{eq:self_attn_renorm}
    \mathsf{T}_{ij} = \frac{\mathsf{T'}_{ij}}{\sum_{m=2}^{i}\mathsf{T'}_{im}}.
\end{equation}
We denote the final output of the text encoder by $\mathbf{k}_i\in\mathbb{R}^{\He\De}$ where $i=1, \cdots, s$, which used as conditioning inputs in T2I diffusion models. We call $\mathbf{k}_i$ as \textit{text embeddings}. 

\noindent\textbf{Denoising latent variables.}
 \emph{Latent diffusion models} are a class of generative models that generate latent tensors ($z_0$) of an image. 
A latent diffusion model $D_{\theta}$ learns to simulate the \emph{denoising} process: starting from Gaussian noise $z_{\tau}$, the model iteratively reconstruct $z_0$ by predicting the noise at step $t$:
    \begin{equation}
        z_{t-1} =z_t- D_{\theta}(z_t; \{\mathbf{k}_i\}),\quad 0< t\leq \tau.
    \end{equation}

\noindent\textbf{Cross-attention module.} 
In T2I latent diffusion models, the interaction between text and image data is performed by multi-head cross-attention. 
In the cross-attention layer $\ell$, we define the \emph{query}, $\mathbf{q}^{(\ell)}_a\in\mathbb{R}^{\Hc\Dc }$, where $a=1,\cdots,\Nc$, and $\Nc$ is the query sequence length at the given cross-attention layer. $\Dc$ is the hidden dimension per head, and $\Hc$ is the number of heads of the cross-attention layer. 

The cross-attention map $A^{(\ell,h)}\in \mathbb{R}^{\Nc\times s}$ at  head $h$ with elements $A^{(\ell,h)}_{ai}$ is defined as
\begin{align}
    A_{ai}^{(\ell,h)} := \frac{\exp(\Omega_{ai})}{\sum_{j=1}^s\exp(\Omega_{aj})},\;
    \Omega_{ai}:=\mathbf{q}_a^{(\ell)\top} W^{(\ell, h)}_\text{c} \mathbf{k}^{(\ell)}_i.
\label{eq:cross_attn}
\end{align}
Here, $W^{(\ell, h)}_\text{c}\in\mathbb{R}^{\Hc\Dc\times \Hc\Dc}$ is a  pretrained matrix at head $h$ in layer $\ell$.

\noindent\textbf{Cross-attention similarity matrix.}
In our study, we use the cross-attention maps averaged over heads and over all layers for which $\Nc=M$:
\begin{equation}
\label{eq:attention_map}
    \mathsf{A}=\frac{1}{L_{M}\Hc}\sum_{\ell=1}^{L_{M}}\sum_{h=1}^{\Hc}A^{(\ell,h)}.
\end{equation}
Here, $L_{M}$ is the number of cross-attention layers with $\Nc=M$. Based on the average cross-attention map above, we define the similarity matrix $\mathsf{S}\in\mathbb{R}^{s\times s}$ as follows:
    \begin{align}\label{cos}
      \mathsf{S}_{ij}\hspace{-0.5mm}:=\hspace{-0.5mm}\frac{\mathsf{C}_{ij}}{\sum_{k=1}^s \mathsf{C}_{ik}}, \mathsf{C}_{ij} \hspace{-0.5mm}:=\hspace{-0.5mm}\frac{\sum^{\Nc}_{a=1} \mathsf{A}_{ai}\mathsf{A}_{aj}}{\big(\sum^{\Nc}_{a=1}\mathsf{A}^2_{ai}\big)^{\frac12}\big(\sum^{\Nc}_{a=1}\mathsf{A}^2_{aj}\big)^{\frac12}}.
    \end{align}
$\mathsf{C}_{ij}$ denotes cosine similarity between maps corresponding to $i$ and $j$.

\section{Understanding and Resolving Text-to-Image Semantic Discrepancy}
\label{sec:evidence}
In this section, we examine the cross-attention maps to identify factors that lead to semantic discrepancies between generated images and text prompts. Based on these findings, we introduce an approach to enhance the fidelity of T2I diffusion models by leveraging text self-attention maps to regularize cross-attention maps.

\subsection{Why Do Generated Images Misrepresent Text?}
\label{sec:empirical}
Our findings are based on the premise that cross-attention maps should capture syntactic relationships between words for accurate T2I generation, as supported by prior works \cite{meral2024conform, agarwal2023star, rassin2024linguistic, feng2022training}. We identify a key factor contributing to incorrect relations in cross-attention maps, resulting in less faithful image generation: While text embedding ($\mathbf{k}_i$)  similarity strongly correlates with the cross-attention similarity matrix $\mathsf{C}$, it insufficiently reflects syntactic bindings in the text prompt. This highlights a fundamental issue in text-image semantic discrepancy: \emph{the similarity in text embeddings lack the syntax information necessary for accurate image generation}. We reveal that text self-attention maps ($\mathsf{T}$) effectively capture syntactic information, but this is not sufficiently encoded in text embeddings. The absence of the information in text embeddings can be due to an artifact in the attention module, known as  \textit{attention sink} \cite{xiao2023efficient}, where attention weights are biased toward the \bos token as detailed later. These findings motivate the solution presented in the following section.

For the following empirical analysis, we use the prompt sets introduced in \cite{chefer2023attend}, containing prompts structured as the following  three categories:

\noindent (i) $[\texttt{\footnotesize attribute}_1][\texttt{\footnotesize object}_1]$ \emph{and} $[\texttt{\footnotesize attribute}_2][\texttt{\footnotesize object}_2]$,

\noindent (ii) 
 $[\texttt{\footnotesize object}_1 (\texttt{\footnotesize animal})]$ \emph{with} $[\texttt{\footnotesize object}_2]$,
 
\noindent (iii)
 $[\texttt{\footnotesize object}_1(\texttt{\footnotesize animal})]$ \emph{ and} $[\texttt{\footnotesize attribute}_2][\texttt{\footnotesize object}_2]$.\\
 Prompt sets in the above formats enable us to focus on two key cases of text-image semantic discrepancies: missing objects and attribute mis-binding. In the following, we refer to the group of \textit{syntactically bound words (tokens)} as pairs of $([\texttt{\footnotesize attribute}_i], [\texttt{\footnotesize object}_i])$, $i=1,2$, and the group of \textit{syntactically unbound words (tokens)} as $([\texttt{\footnotesize attribute}_i], [\texttt{\footnotesize object}_j])$ and $([\texttt{\footnotesize object}_1], [\texttt{\footnotesize object}_2])$, where $i \neq j$.

 
We revisit the role of cross-attention maps in text-image semantic coherence, where spatial overlap or separation is crucial, as discussed in \cite{meral2024conform, agarwal2023star, rassin2024linguistic, feng2022training}.  Figure~\ref{fig:cross_attn_effect} illustrates this effect: On the left, overlapping attention maps for syntactically unbound words lead to object missing, while greater overlap for syntactically bound words enhances attribute binding. This suggests that syntactic associations should be reflected in cross-attention maps.

\begin{figure}
    \centering
    \includegraphics[width=1.02\linewidth]{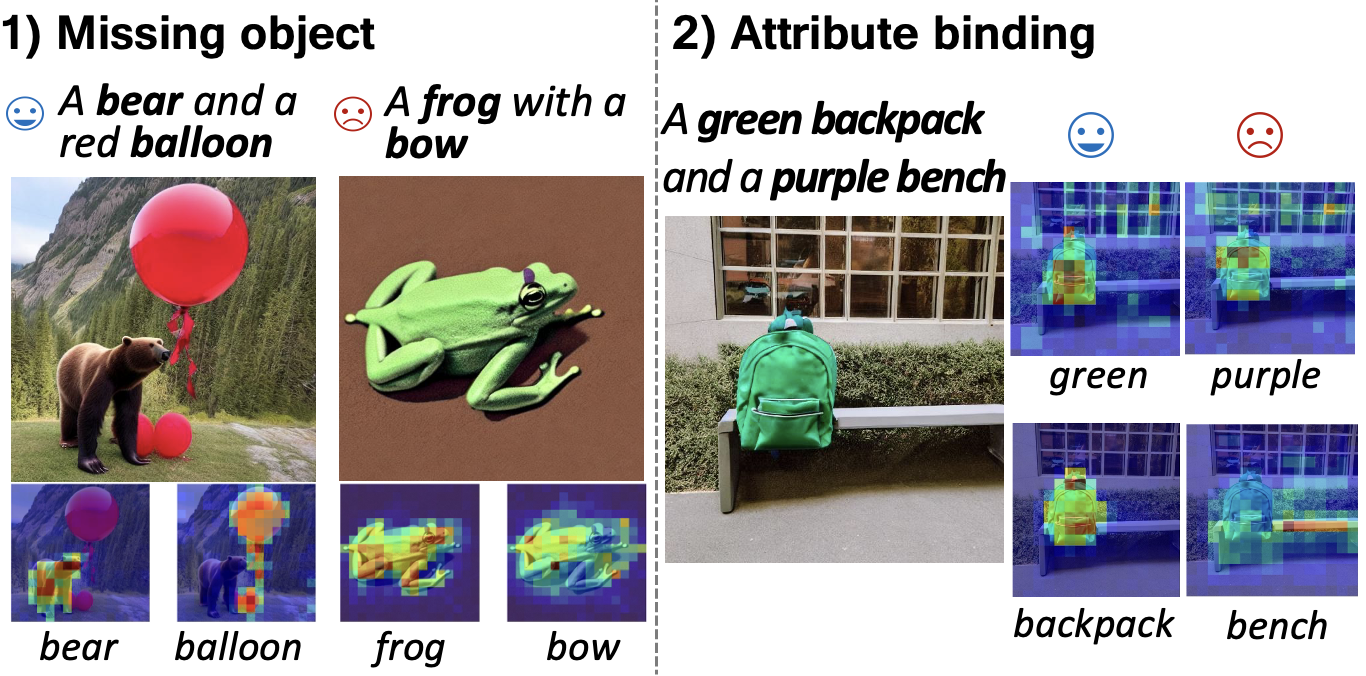}
    \caption{The generated images and cross-attention maps ($\mathsf{A}$) for the specific tokens from SD v1.5. This illustrate the importance of spatial alignment in cross-attention maps for accurate image generation. Divergent (overlapping) cross-attention maps for syntactically unbound (bound) words enhances text-to-image fidelity.}
    \label{fig:cross_attn_effect}
\end{figure}
\begin{figure}[tt]
    \centering
    \begin{subfigure}[b]{0.23\textwidth}
        \centering
        \includegraphics[width=\textwidth]{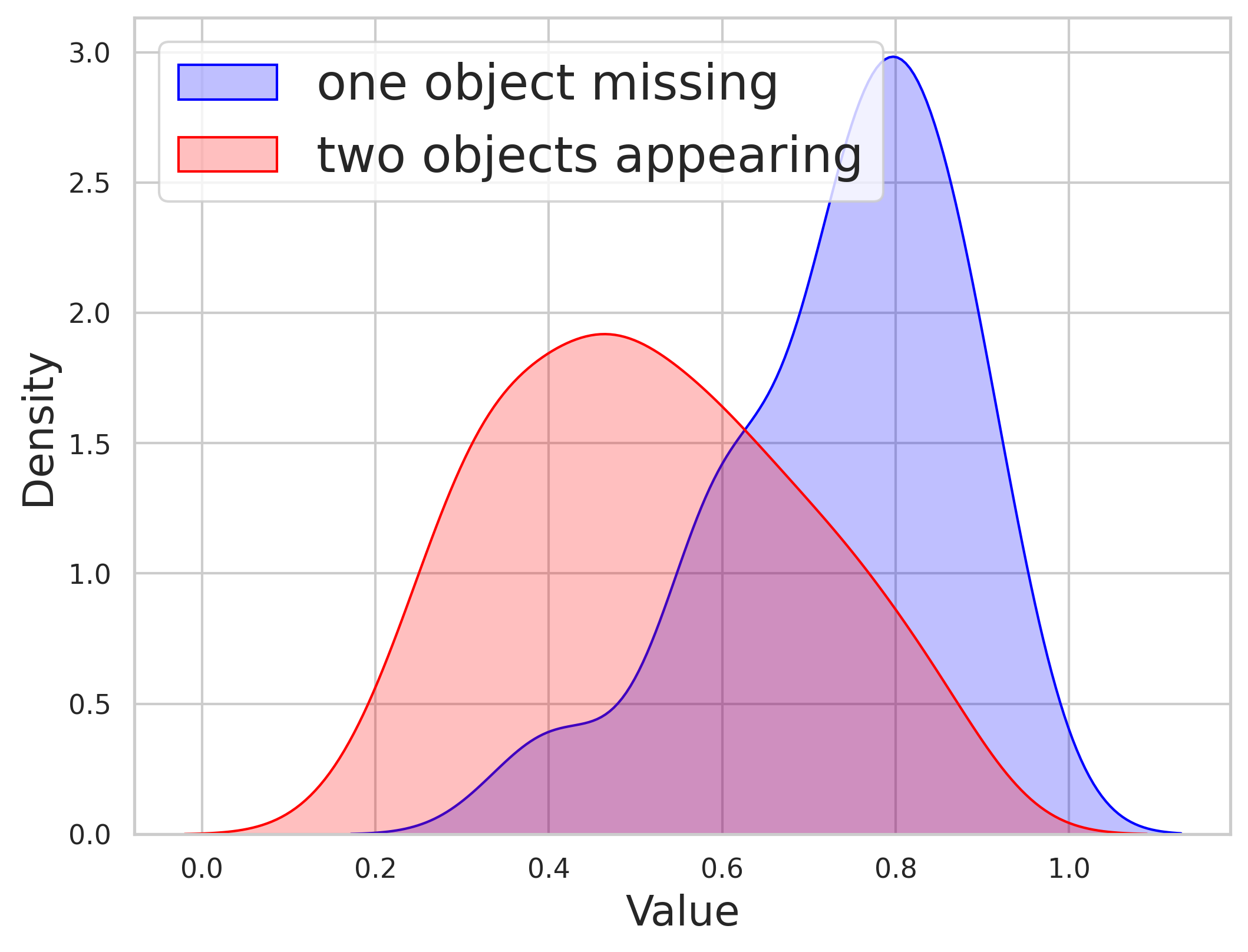}
        \caption{}
        \label{fig:missing_objects_tifa}
    \end{subfigure}
    \begin{subfigure}[b]{0.23\textwidth}
        \centering
        \includegraphics[width=\textwidth]{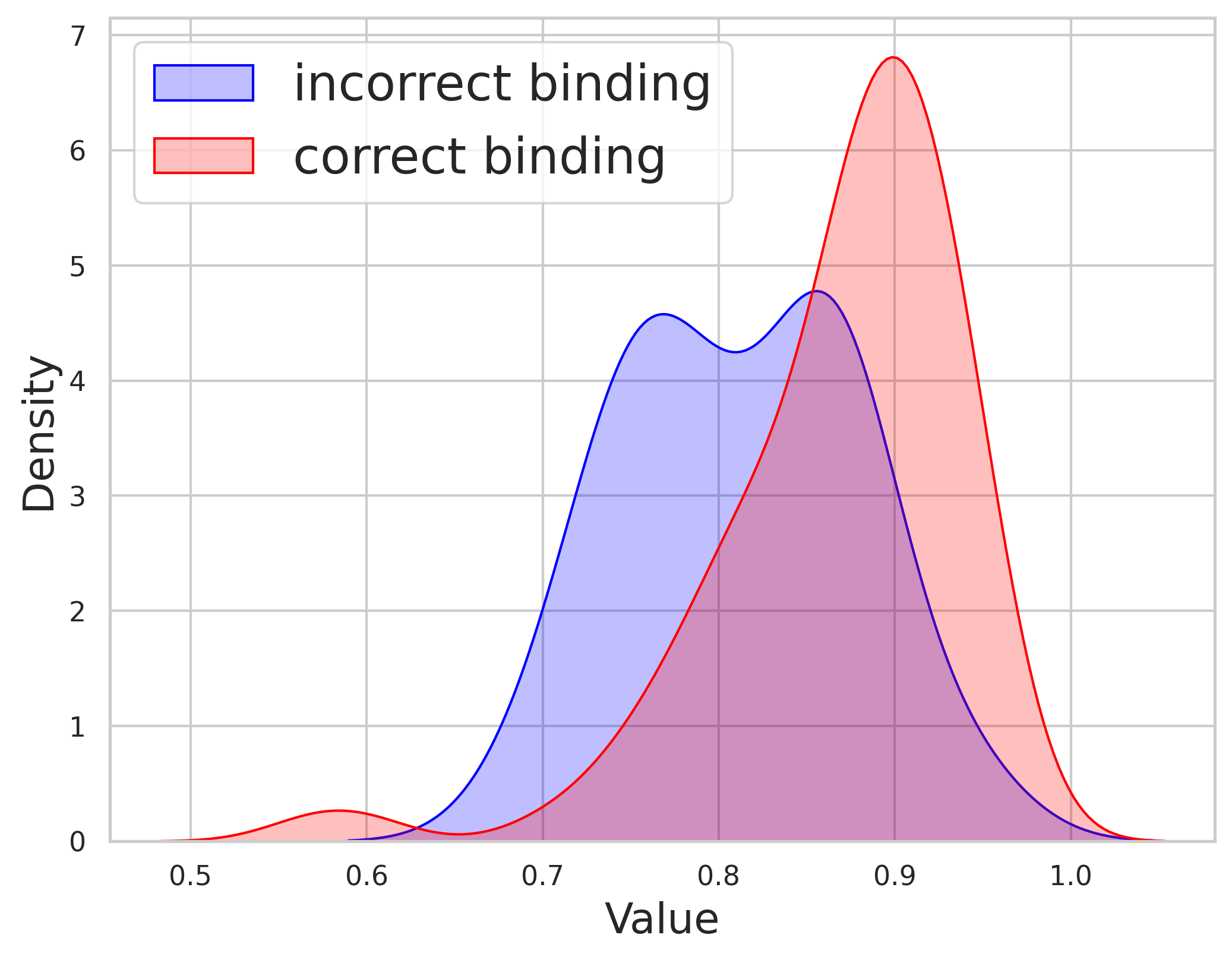}
        \caption{}
        \label{fig:attr_binding_tifa}
    \end{subfigure}
\caption{Comparison for the distributions of cosine similarity between cross-attention maps (at denoising step 10). (a) The cases with one missing object--incorrect and two objects present--correct. (b) The cases with incorrect and correct attribute binding. Correct instances are more frequent when the cosine similarity is low for \textit{objects presence} and high for \textit{attribute binding}.}
\label{fig:tifa}
\end{figure}
To statistically assess cross-attention map's impact, we analyze categories (ii) and (iii) of the aforementioned prompt sets. Assuming spatial overlap is measurable by the cosine similarity matrix ($\mathsf{C}$) in eq.\eqref{cos}, we compute cosine similarities for the attention maps of $\textit{object}_1$ and $\textit{object}_2$ in missing objects, and for $\textit{attribute}_2$ and $\textit{object}_2$ in attribute binding cases. Then, we compare cosine similarity distributions for correct vs. incorrect images on both cases. See Appendix B for setup details. Figure~\ref{fig:tifa} shows that lower cosine similarity tends to correlate with object presence, while higher similarity supports more accurate bindings.

This observation motivates us to examine the factor contributing to similarity in cross-attention maps across tokens.

\noindent\textbf{Finding 1: The cosine similarity of text embeddings has a large correlation with cross-attention similarity matrix $\mathsf{C}$.}
\begin{figure}
    \centering
    \includegraphics[width=1.0\linewidth]{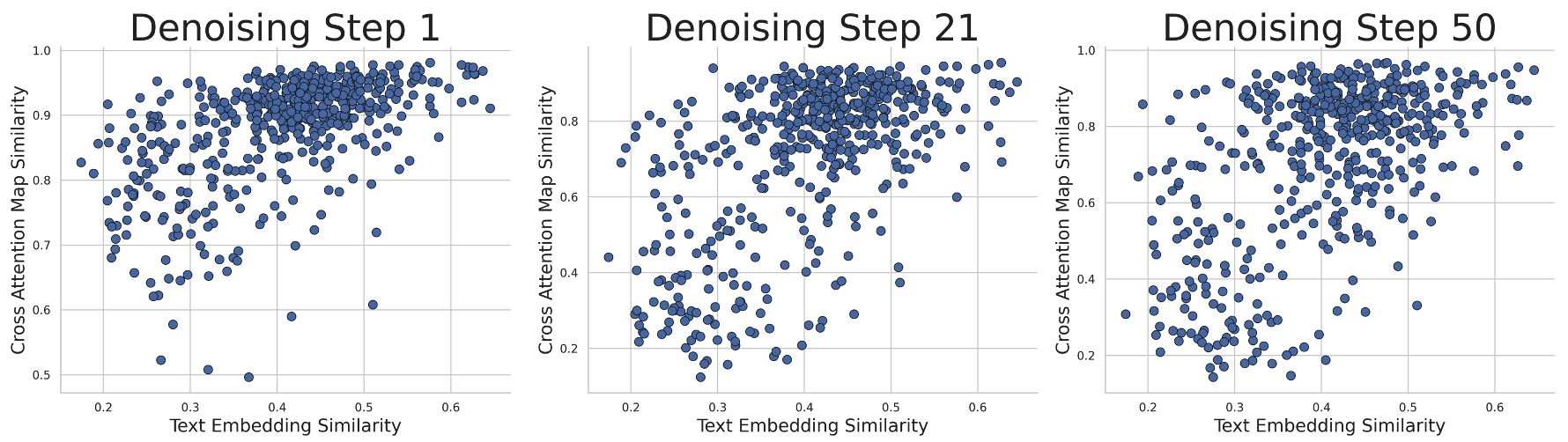}
    \caption{Correlation between the cosine similarity of text embeddings and that of cross-attention maps across denoising steps ($t=1, 21, 50$). Similar text embeddings generally lead to similar cross-attention maps, with the correlation weakening over time.}
    \label{fig:cross_attn_over_time}
\end{figure}
Figure~\ref{fig:cross_attn_over_time} shows there is a correlation between the cosine similarity of text embedding and $\mathsf{C}$, which persists throughout the final denoising steps. This indicates that similar text embeddings can result in overlapping cross-attention maps.

Next, we justify this finding mathematically.
\begin{prop}
    If $A^{(\ell,h)}\in\mathbb{R}^{ \Nc\times s}$ is a cross-attention map defined in eq. \eqref{eq:cross_attn}, then under the assumptions i, ii, 
 and iii described in Appendix A, the cosine similarity matrix can be written in terms of key vectors $\mathbf{k}_i^{(\ell,h)}\in \mathbb{R}^{\Hc\Dc}$ as
\begin{align}
    \cos&(A_i^{(\ell,h)},A_j^{(\ell,h)})= \nonumber\\
   &\;\; \exp\Big(-\frac{1}{2}(\mathbf{k}_i-\mathbf{k}_j)^\top  W^2(\mathbf{k}_i-\mathbf{k}_j)\Big),
\end{align}
up to terms of at least $\mathcal{O}(1/\sqrt{\Nc})$ and $\mathcal{O}(\epsilon)$, where $W^2 :=W^{(\ell,h)\top}_\text{c} \Sigma^{(\ell)} W^{(\ell,h)}_\text{c}$ and $\Sigma^{(\ell)}\in\mathbb{R}^{\Hc\Dc \times \Hc\Dc}$is the covariance matrix of query vectors.
\end{prop}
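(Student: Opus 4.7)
The plan is to express the cosine similarity in eq.~\eqref{cos} in terms of raw exponentials, use a law-of-large-numbers argument on the query index $a$ to replace the sums by expectations over $\mathbf{q}$, and then invoke the Gaussian moment-generating identity $\mathbb{E}[e^{\mathbf{q}^\top v}]=\exp(\mu_q^\top v+\tfrac{1}{2}v^\top \Sigma^{(\ell)} v)$. Writing $A^{(\ell,h)}_{ai}=e^{\Omega_{ai}}/Z_a$ with $Z_a:=\sum_k e^{\Omega_{ak}}$, the cosine similarity becomes the ratio of $\sum_a e^{\Omega_{ai}+\Omega_{aj}}/Z_a^2$ to $\bigl[\sum_a e^{2\Omega_{ai}}/Z_a^2\cdot\sum_a e^{2\Omega_{aj}}/Z_a^2\bigr]^{1/2}$. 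The key structural observation is that if the partition function $Z_a$ is approximately independent of $a$, which I expect to be precisely the role of Assumption~ii/iii through an $\ord{\epsilon}$ bound, then the $Z_a^{-2}$ weights cancel cleanly between numerator and denominator.

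With the $Z_a$ factors cancelled, I would invoke Assumption~i to treat $\{\mathbf{q}_a^{(\ell)}\}_{a=1}^{\Nc}$ as i.i.d.\ Gaussian samples with covariance $\Sigma^{(\ell)}$, so by the CLT each sum $\sum_a e^{\Omega_{ai}+\Omega_{aj}}$ equals $\Nc\cdot\mathbb{E}[e^{\mathbf{q}^\top W_\text{c}^{(\ell,h)}(\mathbf{k}_i+\mathbf{k}_j)}]$ up to multiplicative error $\ord{1/\sqrt{\Nc}}$, and analogously for $\sum_a e^{2\Omega_{ai}}$ and $\sum_a e^{2\Omega_{aj}}$. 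The Gaussian MGF then produces exponents quadratic in the keys with the matrix $W^2=W_\text{c}^{(\ell,h)\top}\Sigma^{(\ell)} W_\text{c}^{(\ell,h)}$. The factors of $\Nc$ and the linear-in-$\mu_q$ contributions cancel between numerator and denominator, and the remaining arithmetic reduces, after completing the square in $(\mathbf{k}_i+\mathbf{k}_j)^\top W^2(\mathbf{k}_i+\mathbf{k}_j)-\mathbf{k}_i^\top W^2\mathbf{k}_i-\mathbf{k}_j^\top W^2\mathbf{k}_j$, to the claimed exponent $-\tfrac{1}{2}(\mathbf{k}_i-\mathbf{k}_j)^\top W^2(\mathbf{k}_i-\mathbf{k}_j)$.

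The hard part will be justifying the $Z_a\approx \bar Z$ reduction with a clean $\ord{\epsilon}$ remainder. The partition function mixes each $\mathbf{q}_a$ against \emph{all} keys $\mathbf{k}_k$, so a uniform-in-$a$ cancellation requires either that the scalars $\mathbf{q}_a^\top W_\text{c}^{(\ell,h)}\mathbf{k}_k$ deviate from a common value by only $\ord{\epsilon}$ uniformly across $k$, or that $Z_a$ concentrates around its mean uniformly in $a$. I expect one of the listed assumptions to supply exactly such a hypothesis, after which a Taylor expansion $1/Z_a^2 = 1/\bar Z^2(1+\ord{\epsilon})$ closes the argument immediately; if instead the assumption is phrased as smallness of $\Omega_{ai}$ itself, I would expand $Z_a = s + \sum_k \Omega_{ak} + \ord{\epsilon^2}$ and verify that the resulting correction terms stay inside the $\ord{\epsilon}$ slack rather than corrupting the leading Gaussian form.
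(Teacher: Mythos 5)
Your skeleton (CLT over the query index, Gaussian moment-generating function, completing the square in the exponent) is exactly the paper's route, and your algebra for the final step is right: the $\mu$-linear terms cancel because $\tfrac12(\mathbf{r}_{ii}+\mathbf{r}_{jj})=\mathbf{r}_{ij}$, and $\tfrac12\mathbf{r}_{ij}^\top\Sigma\mathbf{r}_{ij}-\tfrac14\mathbf{r}_{ii}^\top\Sigma\mathbf{r}_{ii}-\tfrac14\mathbf{r}_{jj}^\top\Sigma\mathbf{r}_{jj}=-\tfrac12(\mathbf{k}_i-\mathbf{k}_j)^\top W_\text{c}^\top\Sigma W_\text{c}(\mathbf{k}_i-\mathbf{k}_j)$. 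The genuine gap is exactly where you flag it: the partition function. Neither of your two candidate resolutions is what Assumption iii supplies. That assumption is an \emph{attention-sink} condition ($\mu_1\gg\mu_i$, $\mu_1\gg\sigma_1$, $\sigma_1\approx\sigma_i$, so $e^{\mu_i-\mu_1}\sim\ord{\epsilon}$), under which $Z_a=e^{\Omega_{a1}}\big(1+\sum_{m\geq2}e^{\Omega_{am}-\Omega_{a1}}\big)=e^{\Omega_{a1}}(1+\ord{\epsilon})$ with probability at least $1-\epsilon^3$. This does \emph{not} make $Z_a$ independent of $a$: $\Omega_{a1}=\mathbf{q}_a^\top W_\text{c}\mathbf{k}_1$ has standard deviation $\sigma_1\approx\sigma_i$, i.e.\ fluctuations of exactly the scale that survives into the final answer, so declaring $Z_a\approx\bar Z$ would discard a contribution of leading order. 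Your fallback expansion $Z_a=s+\sum_k\Omega_{ak}+\cdots$ is the opposite regime (all scores small), not the sink regime where one score dominates.

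The correct move is to keep $Z_a$ random but note that it is (to $\ord{\epsilon}$) the exponential of a \emph{linear} form in $\mathbf{q}_a$, so it folds into the same MGF computation: $A_{ai}A_{aj}=e^{\mathbf{q}_a^\top\mathbf{r}_{ij}}+\ord{\epsilon^3}$ with $\mathbf{r}_{ij}=W_\text{c}(\mathbf{k}_i+\mathbf{k}_j-2\mathbf{k}_1)$, and likewise for the diagonal terms. The $\mathbf{k}_1$ shift then cancels in the final ratio because, as your own completion of the square shows, the answer depends only on $\mathbf{k}_i-\mathbf{k}_j$ and is invariant under a common translation of the keys. So your final formula is right, and your intuition that ``the $Z_a$ factors drop out'' is vindicated --- but only after the computation, as a consequence of this translation invariance, not as a pointwise-in-$a$ cancellation you can invoke up front. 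As written, the step ``$Z_a^{-2}$ cancels cleanly between numerator and denominator'' is unjustified and, interpreted literally (replacing $\mathbb{E}[X/Z^2]$ by $\mathbb{E}[X]/\bar Z^2$), false at the order you need.
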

\noindent Refer to Appendix A for the proof.

We empirically and mathematically showed that the similarity in text embeddings $\mathbf{k}_i$ influences cross-attention maps. Next, we evaluate whether the similarity of these embeddings reflects syntax information in text inputs.

\noindent\textbf{Finding 2-1: There is no significant correlation between word syntactic bindings and text embedding similarity.}
Prior study~\cite{yuksekgonul2022and} suggests CLIP embeddings, used in SD, behave like a bag-of-words model, ignoring word relationships.
We also show in Figure~\ref{fig:analysis_sub2} the similarity in CLIP text embeddings does not correlate with syntactic bindings. Specifically, we expect close embeddings for \textit{syntactically bound tokens} and distant embeddings for \textit{unbound tokens}, yet the distributions lack separation.

\noindent\textbf{Finding 2-2: Text self-attention maps do have syntax information.}
We examine the text encoder that produces text embeddings. Interestingly, the self-attention maps in the encoder capture syntactic relationships, as shown in Figure~\ref{fig:text_self_attn}. These maps reveal higher similarity between \textit{syntactically bound tokens} and lower similarity between \textit{unbound tokens}. For more complex prompt structures, see the Appendix B. 
\begin{figure}
    \centering
    \begin{subfigure}[b]{0.28\textwidth}
        \centering
        \includegraphics[width=\textwidth]{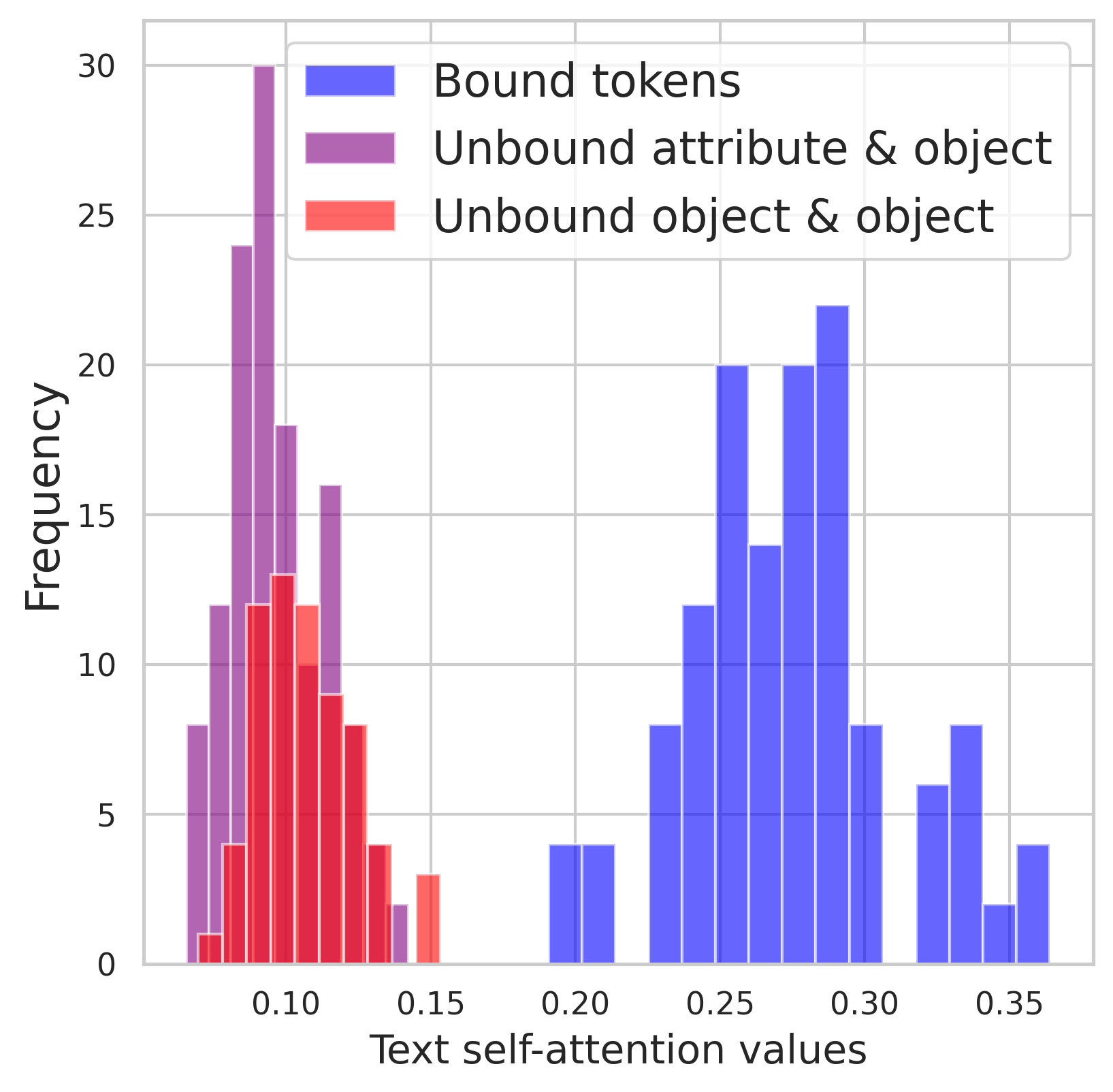}
        \caption{}
        \label{fig:text_self_attn}
    \end{subfigure}
    \begin{subfigure}[b]{0.18\textwidth}
        \centering
        \includegraphics[width=\textwidth]{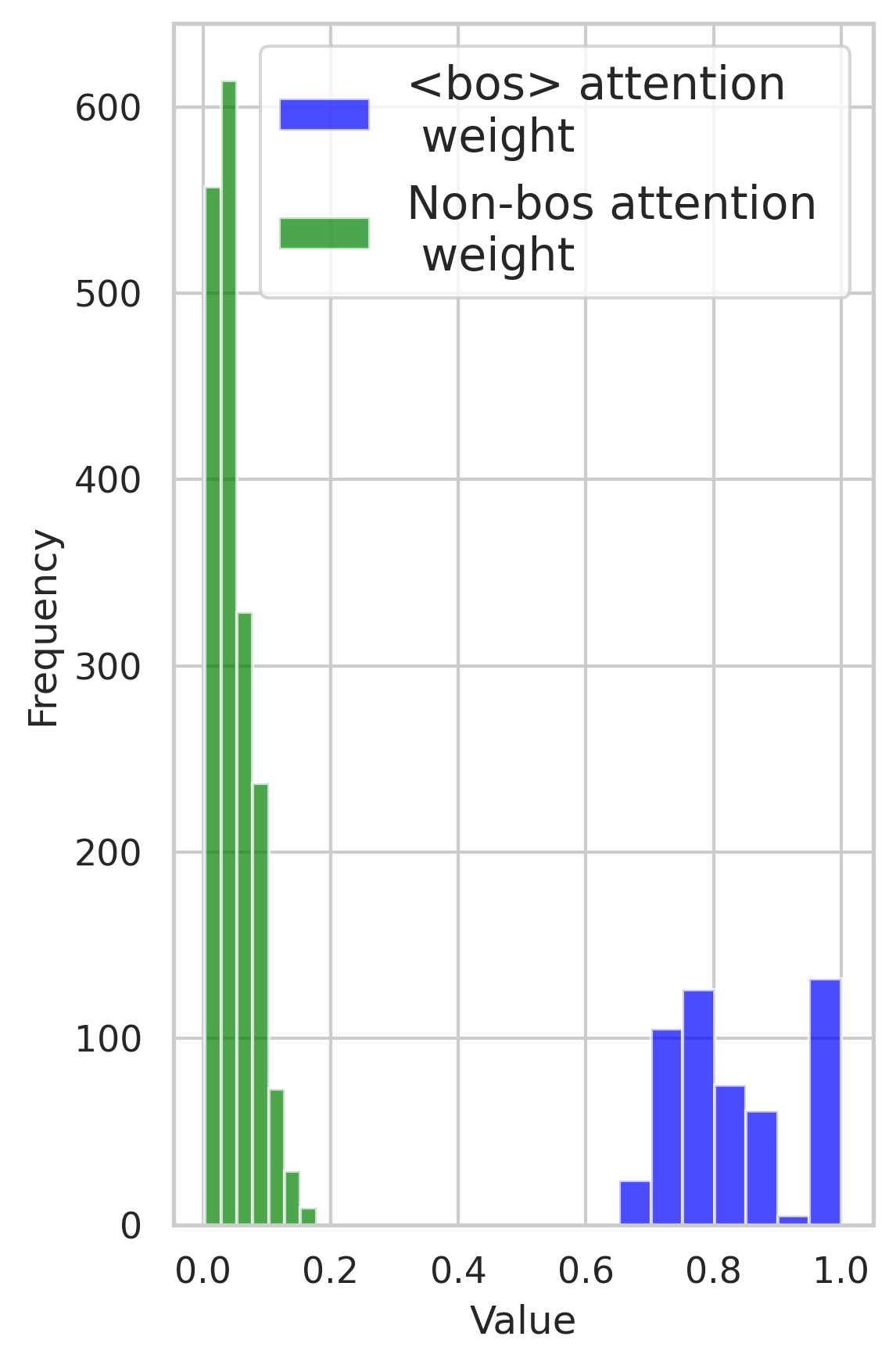}
        \caption{}
        \label{fig:attn_sink}
        \end{subfigure}
\caption{(a) The distributions of  text self-attention value ($\mathsf{T}'_{ij}$ in eq.\eqref{eq:text_self_map}) for bound tokens ($\text{attribute}_m$, $\text{object}_m$) and unbound tokens ($\text{attribute}_m$, $\text{object}_n$) / ($\text{object}_1$, $\text{object}_2$), where $m, n \in \{1,2\}$. The separate  distributions indicate text self-attention maps can indeed represent the syntactic relationships. (b) Comparison of text self-attention probability  histograms between \bos token and non-\bos tokens on 100 prompts: The probability allocated to \bos is on average \>20 times larger than that of other tokens. }
\end{figure}

\noindent\textbf{Finding 2-3: The \textit{attention sink} can contribute to why text embeddings lack the syntax information.}
\emph{Why do text embeddings lack relational information despite being derived from multiple self-attention modules?}
We attribute this gap to \textit{attention sink}~\cite{sun2024massive, xiao2023efficient}, where attention scores are concentrated on a few tokens. In CLIP's text encoder, attention is mainly focused on the \bos token, as discussed in \cite{chefer2023attend, yi2024towards} and shown in Figure~\ref{fig:attn_sink}. We hypothesize the focus on the \bos token can limit the transfer of relational information from self-attention maps ($T$) to text embeddings, as attention scores for other tokens remain much smaller, minimizing their influence in each self-attention layer.

To provide a mathematical justification, we express the difference between the output and input of one block of the self-attention module for a token $e_i$ as: 
\begin{align} 
\mathbf{o}^{(\ell,h)}_i &= \sum_{j=1}^i T^{(\ell,h)}_{ij}W_{\text{v}}^{(\ell,h)} \mathbf{e}^{(\ell)}_j,
\end{align} 
where $W_{\text{v}}^{(\ell,h)}\in\mathbb{R}^{\De\times\He\De}$ is a parameter, $ \mathbf{e}_i \in \mathbb{R}^{\He\De}$, and $T_{ij}^{(\ell,h)}$ is the self-attention matrix in eq. \eqref{text sa matrix}. We consider the situation where \emph{attention sink} occurs, that is the attention weights for the \bos token is much higher than the rest of the sequence:
\begin{equation}
    \epsilon = \frac{\sum^i_{j\neq 1}T_{ij}}{T_{i1}}\ll 1, \qquad i=2,\cdots,s.
\end{equation}
In Appendix A, we prove the following statement:
\begin{prop}
    Define matrix $R\in\mathbb{R}^{s\times s}$ as
    \begin{equation}
        R_{ij}=  \mathbf{e}^{(\ell)\top}_iW_{\text{v}}^{(\ell,h)\top} W_{\text{v}}^{(\ell,h)} \mathbf{e}^{(\ell)}_j,
    \end{equation}
and suppose it has the property
\begin{equation}
    \frac{|R_{mn}|}{R_{11}}\sim\mathcal{O}(1/\epsilon),\qquad \frac{|R_{1m}|}{R_{11}}\sim\mathcal{O}(1),\quad 1<m,n\leq s,
\end{equation}
where $\mathbf{e}^{(\ell)}_1$ is the bos embedding. Then the following holds:
\begin{equation}
    \cos(\mathbf{o}^{(\ell,h)}_i,\mathbf{o}^{(\ell,h)}_j)=1-\mathcal{O}(\epsilon).
\end{equation}
\end{prop}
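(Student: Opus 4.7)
The plan is to exploit the attention sink to show that every output $\mathbf{o}^{(\ell,h)}_i$ is, up to a small perturbation, a scalar multiple of the single vector $\mathbf{u} := W_{\text{v}}^{(\ell,h)}\mathbf{e}^{(\ell)}_1$; if all outputs are then nearly parallel to $\mathbf{u}$, their pairwise cosine must be close to $1$. First I would factor out the dominant \bos contribution,
\begin{equation*}
\mathbf{o}^{(\ell,h)}_i = T^{(\ell,h)}_{i1}\bigl(\mathbf{u}+\mathbf{r}_i\bigr),\quad \mathbf{r}_i := \sum_{j>1}\frac{T^{(\ell,h)}_{ij}}{T^{(\ell,h)}_{i1}}\,W_{\text{v}}^{(\ell,h)}\mathbf{e}^{(\ell)}_j,
\end{equation*}
so that the scalar $T^{(\ell,h)}_{i1}$ cancels out of the cosine and it suffices to study $\mathbf{u}+\mathbf{r}_i$. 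The sink hypothesis immediately yields $\sum_{j>1}T^{(\ell,h)}_{ij}/T^{(\ell,h)}_{i1}\le\epsilon$.

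I would then estimate the three ingredients of the cosine expansion. Observe $\|\mathbf{u}\|^2=R_{11}$; the cross-term $\mathbf{u}\cdot\mathbf{r}_i = \sum_{j>1}(T_{ij}/T_{i1})R_{1j}$ is of order $\mathcal{O}(\epsilon R_{11})$ by $|R_{1j}|/R_{11}=\mathcal{O}(1)$ and the sink bound; and the bilinear $\mathbf{r}_i\cdot\mathbf{r}_j=\sum_{m,n>1}\frac{T_{im}T_{jn}}{T_{i1}T_{j1}}R_{mn}$ is \emph{also} $\mathcal{O}(\epsilon R_{11})$, by combining $|R_{mn}|/R_{11}=\mathcal{O}(1/\epsilon)$ with two copies of the sink bound, whose product $\epsilon^2$ absorbs the $1/\epsilon$ blow-up. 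The same computation with $i=j$ gives $\|\mathbf{r}_i\|^2=\mathcal{O}(\epsilon R_{11})$. Setting $p_i:=\mathbf{u}\cdot\mathbf{r}_i$, $q_i:=\|\mathbf{r}_i\|^2$, $r_{ij}:=\mathbf{r}_i\cdot\mathbf{r}_j$, and Taylor-expanding the cosine $(R_{11}+p_i+p_j+r_{ij})/\sqrt{(R_{11}+2p_i+q_i)(R_{11}+2p_j+q_j)}$ to first order, the linear-in-$p$ contributions cancel between the numerator and the expansion of the denominator, leaving
\begin{equation*}
\cos(\mathbf{o}^{(\ell,h)}_i,\mathbf{o}^{(\ell,h)}_j) = 1-\frac{\|\mathbf{r}_i-\mathbf{r}_j\|^2}{2R_{11}}+\mathcal{O}(\epsilon^2),
\end{equation*}
via the identity $q_i+q_j-2r_{ij}=\|\mathbf{r}_i-\mathbf{r}_j\|^2$. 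Since $\|\mathbf{r}_i-\mathbf{r}_j\|^2\le 2(q_i+q_j)=\mathcal{O}(\epsilon R_{11})$, the claim $\cos = 1-\mathcal{O}(\epsilon)$ follows.

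The main obstacle is the tight balance required in bounding $\mathbf{r}_i\cdot\mathbf{r}_j$: the hypothesis permits individual entries $R_{mn}$ to be as large as $R_{11}/\epsilon$, so naively the bilinear term need not even be bounded relative to $R_{11}$. The rescue is structural---the sink constraint produces an $\epsilon$ factor for \emph{each} of the two sums over $m$ and $n$, and the resulting $\epsilon^2$ exactly cancels the $1/\epsilon$ allowed growth of $R_{mn}$. Organising the expansion so that the manifestly non-negative combination $\|\mathbf{r}_i-\mathbf{r}_j\|^2$ appears explicitly, rather than a generic $\mathcal{O}(\epsilon)$ remainder of uncertain sign, is what makes the stated $1-\mathcal{O}(\epsilon)$ form (and the implicit $\cos\le 1$) transparent.
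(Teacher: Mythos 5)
Your proposal is correct and follows essentially the same route as the paper's proof: isolate the dominant \bos contribution $T^{(\ell,h)}_{i1}W_{\text{v}}^{(\ell,h)}\mathbf{e}^{(\ell)}_1$, use the sink bound $\sum_{j>1}T^{(\ell,h)}_{ij}/T^{(\ell,h)}_{i1}\le\epsilon$ together with the stated conditions on $R$ to show the cross terms are $\mathcal{O}(\epsilon R_{11})$ (with the two factors of $\epsilon$ absorbing the $\mathcal{O}(1/\epsilon)$ size of $R_{mn}$), and Taylor-expand the cosine so that the linear $R_{1m}$ contributions cancel between numerator and denominator. Your one improvement is writing the surviving correction explicitly as $-\|\mathbf{r}_i-\mathbf{r}_j\|^2/(2R_{11})$, which makes its sign manifest, whereas the paper reaches the same conclusion less directly via positive semi-definiteness of $R$ and Cauchy--Schwarz.
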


This suggests that cosine similarity between token vectors ($\mathbf{o}_i^{(\ell, h)}$)---potentially influencing cross-attention maps---remains barely changed across text self-attention layers due to attention sink on the \bos token. In other words, the attention sink can hinder the accurate encoding of self-attention maps into embeddings.

Our findings reveal text embedding alone are insufficient for generating semantically aligned images. \emph{On the other hand, we notably show the potential of transferring neglected syntactic information from text self-attention maps to the cross-attention to enhance T2I semantic alignment.}

\subsection{Text Self-Attention Maps (T-SAM) Guidance }

In the previous section, we show text self-attention maps capture syntax information within a sentence. Building on this insight, we propose leveraging the self-attention maps within the text encoder—a component of diffusion models—to enhance cross-attention maps. By minimizing the distance between the similarity matrix of the cross-attention maps and the text self-attention matrix, our approach ensures that embedded syntactic relationships are effectively transferred to cross-attention.

Our method optimizes cross-attention maps during inference, adjusting their similarity matrix to align with the text self-attention matrix $\mathsf{T}$. The normalized cosine similarity matrix, $\mathsf{S}$ (defined in eq.\eqref{cos}), is used as the cross-attention similarity matrix. This is achieved by simply minimizing the loss function:
\begin{equation}
        \mathcal{L}(z_t)=\sum_{i=1,j\leq i}^s  \rho_i|\mathsf{T}_{ij}^\gamma- \mathsf{S}_{ij}(z_t)|,
\label{eq:loss_fn}
\end{equation}
where the exponent $\gamma$ acts to amplify larger values and compress smaller ones so the effect of controlling temperature and 
$\rho_i=i/s$.
For example, if two words in the prompt have negligible syntactic relation according to the text self-attention matrix, i.e. $\mathsf{T}_{ij}\approx0$, we demand that their similarity of cross-attention maps must \emph{not} be similar: $\mathsf{S}_{ij}\approx 0$.

In practice, this optimization will be applied only to $z_t$ at a few denoising steps during inference as followed:
\begin{equation}
        z'_t = z_t - \alpha \cdot\nabla_{z_t}  \mathcal{L}(z_t).
\label{eq:latent_update}
\end{equation}

\section{Experiments}
\label{sec:exp}

\paragraph{Prompt datasets.}
We evaluate our approach on diverse text prompts using two existing benchmarks.
First, the \textit{TIFA v1.0 benchmark}~\cite{hu2023tifa} is a large-scale text-to-image generation dataset featuring a wide range of sentence structures. This benchmark comprises 4,000 prompts, including 2,000 image captions from the COCO validation set~\cite{lin2014microsoft}, 161 prompts from DrawBench~\cite{saharia2022photorealistic}, 1,420 prompts from PartiPrompt used in Parti~\cite{yu2022scaling}, and 500 texts from PaintSkill~\cite{cho2023dall}.
Second, we use structured prompt sets containing multiple objects and their corresponding attributes from \textit{Attend-n-Excite}~\cite{chefer2023attend}. Prompts in this dataset are grouped into three categories: \textit{Objects} (e.g., “[$\text{attribute}_1$] [$\text{object}_1$] and [$\text{attribute}_2$] [$\text{object}_2$]”), \textit{Animals-Objects} (e.g., “[$\text{animal}$] with [$\text{object}$]” or “[$\text{animal}$] and/with [$\text{attribute}$] [$\text{object}$]”), and \textit{Animals} (e.g., “[$\text{animal}_1$] and [$\text{animal}_2$]”). We exclude the \textit{Animals} category, as it lacks the complex syntactic relations as discussed in \cite{rassin2024linguistic}.
\begin{table}[t]
    \centering
    \vspace{-1mm}
    \caption{Evaluation results for the TIFA benchmark, including TIFA scores and CLIP similarity scores. \textit{External Info.} indicates whether external information is used. CLIP-I (CLIP-T) refers to image-text (text-text) CLIP similarity.}
    \begin{tabular}{l|c|ccc}
    \toprule
         & External Info. &  TIFA & CLIP-I & CLIP-T   \\
         \midrule
         SD & \xmark& 0.79&0.33  & \textbf{0.77}   \\
        LB & \checkmark& 0.80  &0.33 & 0.76  \\

        \textbf{T-SAM} &\xmark&  \textbf{0.83} & \textbf{0.34} & \textbf{0.77}  \\
    \bottomrule
    \end{tabular}
    \vspace{-2mm}
    \label{tab:tifa_scores}
\end{table}

\noindent\textbf{Implementation details.}
Our method (T-SAM) is based on Stable Diffusion (SD) v1.5 \cite{rombach2022high}. We use 50 sampling iterations, updating $z_t$ at each denoising step from 1 to 25. We set $M=256$ in eq.\eqref{eq:attention_map} and $\gamma = 4$ in eq.\eqref{eq:loss_fn}. For the TIFA benchmark, we generate one image per prompt with a shared seed across methods, while using a unique seed for each prompt. We set $\alpha = 40$ in eq.\eqref{eq:latent_update}. For \textit{Attend-n-Excite} prompts, we generate results using 64 seeds, in line with standard practice, setting $\alpha=10$ and applying 20 iterative updates at $i \in \{0, 10, 20\}$. Additional details are provided in the Appendix B.
\begin{figure*}[ht]
    \centering
    \vspace{-2mm}
    \includegraphics[width=0.9\linewidth]{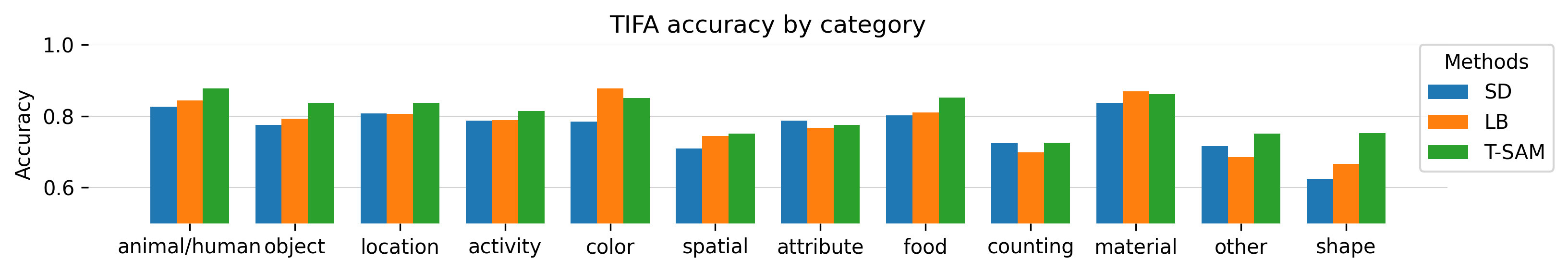}
    \caption{Accuracy for each question type in the TIFA benchmark. Our method achieves the best performance in most categories. The \textit{attribute} includes properties such as large, small, young, etc. }
    \label{fig:tifa_comp}
    \vspace{-1mm}
\end{figure*}
\begin{figure*}[ht]
    \centering
    \includegraphics[width=0.83\linewidth]{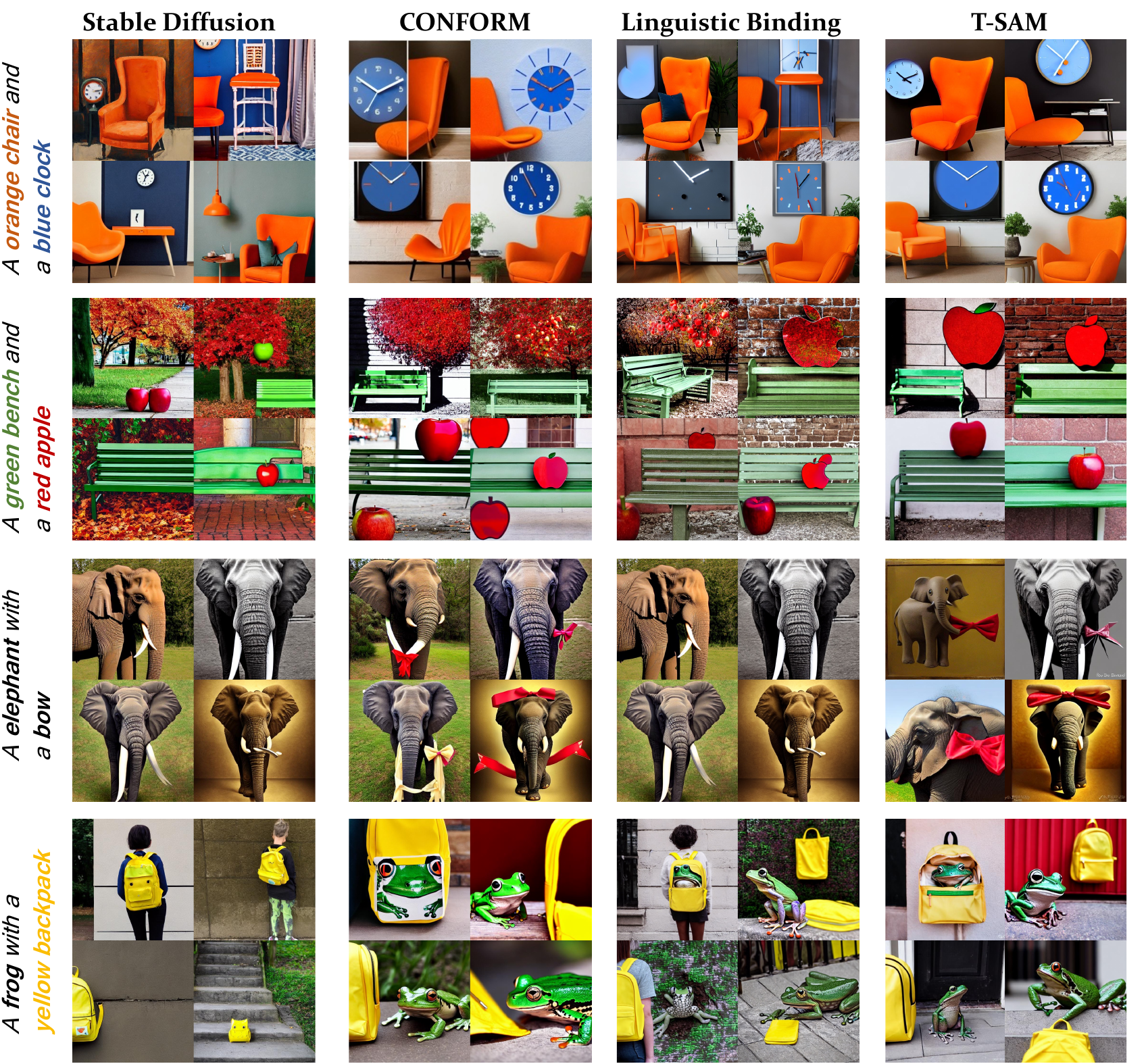}
    \caption{Comparison of our method (T-SAM) with recent state-of-the-art methods on prompts from \textit{Objects} and \textit{Animals-Objects}. The images corresponding to the same position across different methods are generated using the same seed. Best in zoom.}
    \label{fig:fixed_template_images}
    \vspace{-2mm}
\end{figure*}

\noindent\textbf{Evaluation metrics.}
To quantitatively evaluate the accuracy of generated images, we use two metrics: TIFA scores~\cite{hu2023tifa} (for TIFA benchmark) and CLIP similarity scores.
TIFA scores measure how well generated images reflect the text prompts. In TIFA, questions for each prompt are generated by GPT-3.5~\cite{gpt}, and a vision-language model~\cite{li2022mplug} provides answers.
For CLIP~\cite{radford2021learning} scores, we follow the protocol in \cite{chefer2023attend} to evaluate both image-text and text-text similarity. Image-text similarity includes two measures: \textit{full-prompt similarity}, which evaluates overall alignment with the prompt, and \textit{minimum object similarity}, defined as the lowest similarity score between the generated image and the two main subjects in the prompt. Additionally, we use BLIP~\cite{li2022blip} to generate captions for the images, comparing the input prompt with these captions using CLIP to assess text-text similarity.

\noindent\textbf{Baselines.}
We compare our method with SD, Linguistic-Binding (LB)~\cite{rassin2024linguistic}, Attend-n-Excite (A\&E)~\cite{chefer2023attend} and {\textit{\footnotesize CONFORM}} \cite{meral2024conform}.
LB depending on external parsers (SpaCy \cite{honnibal2017spacy}) is limited to attribute-binding tasks. This method can be applied to both the TIFA benchmark and the Attend-n-Excite (\textit{Objects} and \textit{Animals-Objects}) prompt sets, though they are used only when the text parser identifies (modifier, entity-noun) pairs within the given prompts.
A\&E and {\textit{\footnotesize CONFORM}} require manual token selection per prompt, restricting their use to the fixed-template prompts (\textit{Objects} and \textit{Animals-Objects}) due to the high cost of selecting token indices for diverse prompts. We reproduce the images with SD, {\textit{\footnotesize CONFORM}}, and LB based on SD v1.5 and using the same seeds.

\subsection{Results}
\begin{table*}[t]
    \centering
    \vspace{-2mm}
    \begin{minipage}{\textwidth}  
        \centering
        \caption{CLIP similarity scores. Average CLIP similarities between the text prompts and the images generated with 64 different seeds.}
        \begin{tabular}{lccccc|cc}
            \toprule
             & \multirow{3}{*}{} & \multicolumn{4}{c}{\textbf{Image-Text Prompt}} &\multicolumn{2}{c}{\multirow{2}{*}{\textbf{Prompt-Caption}}}  \\
            &&\multicolumn{2}{c}{\textbf{Full Prompt}} & \multicolumn{2}{c}{\textbf{Minimum Object}} & & \\ 
            \cmidrule(lr){3-4} \cmidrule(lr){5-6} \cmidrule(lr){7-8}
            &External Info.&  Objects & Animals-Objects &  Objects & Animals-Objects & Objects & Animals-Objects\\ 
            \midrule
            SD & \xmark &0.34 & 0.34 & 0.25 & 0.26 & 0.76 & 0.80 \\
            LB & \checkmark&\textbf{0.36} & 0.35 & 0.27 & 0.27  & \underline{0.80} & 0.83\\
            CONFORM & \checkmark&\textbf{0.36} & \underline{0.36}  & \textbf{0.28} & \textbf{0.28} & \textbf{0.81} & \textbf{0.85} \\
            A\&E & \checkmark&\textbf{0.36} & 0.35 & 0.27 & 0.26 & 0.81 & 0.83\\
            \midrule
            \textbf{T-SAM}&\xmark & \textbf{0.36} & \textbf{0.37} &  \textbf{0.28} &  \textbf{0.28} & \underline{0.80} & \textbf{0.85} \\
            \bottomrule
        \end{tabular}
        \label{tab:merged_clip_scores}
    \end{minipage}
    \vspace{-2mm}
\end{table*}

\begin{figure}
    \centering
    \includegraphics[width=1.0\linewidth]{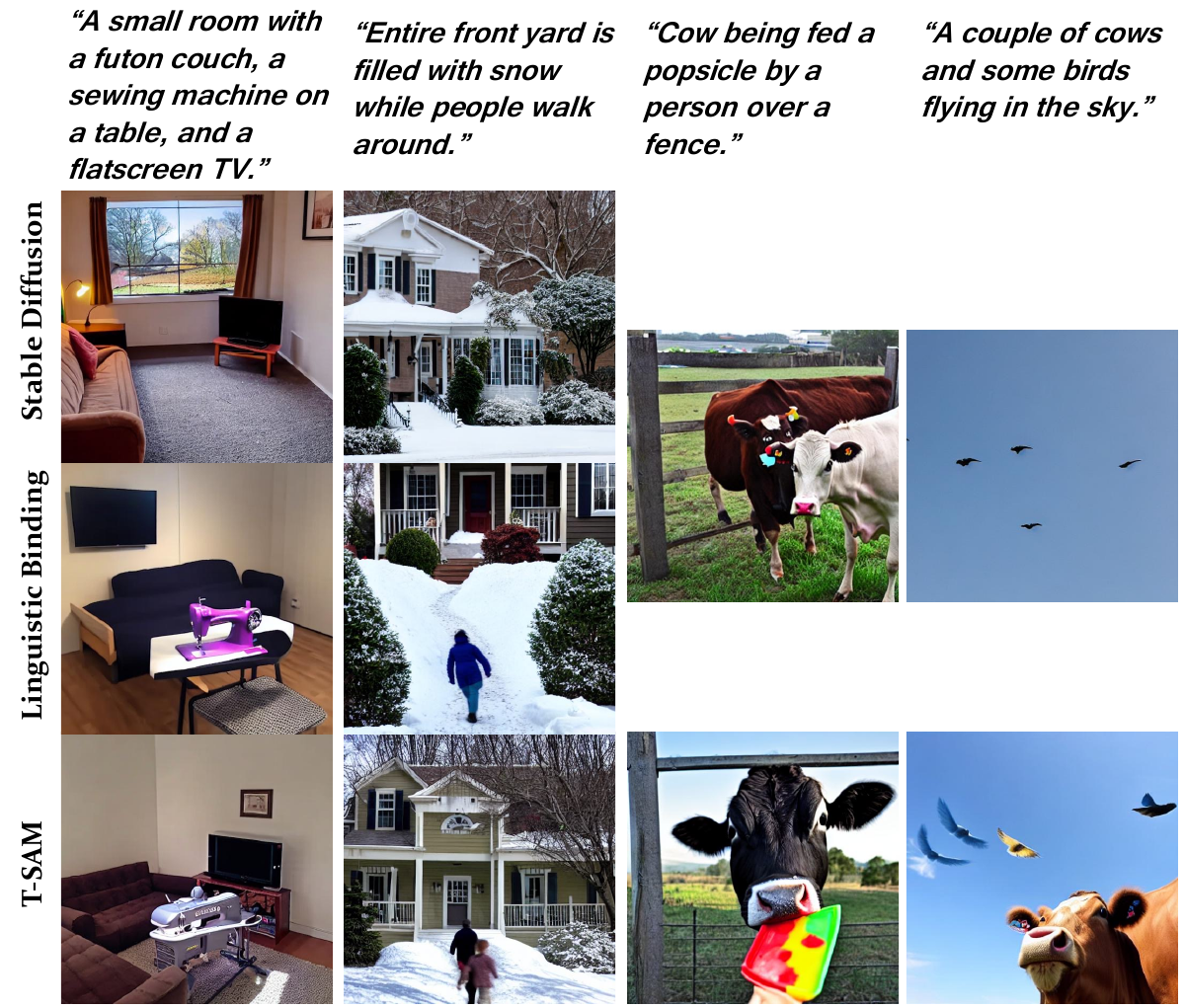}
    \caption{Qualitative comparison using prompts from MSCOCO contained in TIFA benchmark. Best in zoom.}
    \label{fig:tifa_figures}
    \vspace{-2mm}
\end{figure}
\noindent\textbf{Quantitative results.}
Table~\ref{tab:tifa_scores} shows the evaluation results in TIFA benchmark and Figure~\ref{fig:tifa_comp} illustrates the breakdown of TIFA scores across question types. We highlight that our approach (T-SAM) outperforms SD and LB on complex syntactic prompts, where other baselines are inapplicable.  While LB demonstrates improvements over SD in the \textit{color}, \textit{shape}, and \textit{material} categories, which are closely related to attribute-binding tasks, it fails to enhance accuracy in the \textit{activity} and \textit{counting} categories. This underscores LB’s inability to capture diverse word relationships, even when using external text parsers.
In contrast, our method shows improvements over SD in nearly all categories including \textit{color}, \textit{shape},  \textit{counting}, and \textit{activity}, except for \textit{attribute} (e.g., properties such as large, small, young), which differ from the term ``attribute'' used in this study, demonstrating its versatility across a wide range of word relationships. Additionally, it achieves higher CLIP similarity scores than the baselines, confirming its superior semantic alignment.

In the structured templates (\textit{Objects} and \textit{Animals-Objects}), T-SAM performs comparably to the state-of-the-art {\textit{\footnotesize CONFORM}} , which, unlike our approach, requires manually defined token indices for positive and negative groups. Our method outperforms LB and A\&E, which also rely on external inputs. This demonstrates that extracting syntactic information from text self-attention maps can be more effective than relying on text parsers or manually selecting tokens for optimization.
An additional ablation study is provided in Appendix B.

\noindent\textbf{Qualitative results.}
Figure~\ref{fig:tifa_figures} presents images generated by T-SAM and the baselines, showcasing effectiveness of T-SAM with complex prompts, such as MSCOCO captions from the TIFA benchmark. Our approach successfully generates multiple elements, including \textit{a sewing machine} in the first prompt, \textit{people} in the second, \textit{popsicle} in the third, and \textit{cow} in the fourth. Notably, SD misses some elements. And LB generates identical images to SD when no relations are extracted from the text parser, as seen in the third and fourth prompts, highlighting its limited generalizability.

In the structured templates (\textit{Objects} and \textit{Animals-Objects}; see Fig.~\ref{fig:fixed_template_images}),  T-SAM either outperforms or performs comparably to the baselines. SD often omits objects (\eg, \textit{clock}, \textit{apple}, or \textit{bow}) or misbinds attributes (e.g., \textit{blue clock}), while {\textit{\footnotesize CONFORM}} , LB, and our method more reliably generate specified elements in the prompts. However, {\textit{\footnotesize CONFORM}}  and LB have limitations. LB sometimes has fidelity issues, such as missing a \textit{clock} in the first image of the first prompt. Its effectiveness is also limited by its focus on attribute binding; for prompts without modifiers (e.g., \textit{An elephant with a bow}), LB generates images exactly same as SD. Conversely, {\textit{\footnotesize CONFORM}}  sometimes introduces overly strict separations in the image, as seen in the first example for the first prompt. In contrast, our method is broadly applicable and achieves notable improvements across diverse cases without these artificial separations. This advantage likely comes from our method’s use of smoother linguistic structures from text attention maps, rather than binary categorization of positive and negative pairs used in {\textit{\footnotesize CONFORM}}.

\section{Conclusion}
To enhance fidelity in text-to-image diffusion models, we improved cross-attention maps by aligning their similarity matrix with text self-attention maps. Our approach is based on two insights: (1) similar text embeddings produce similar cross-attention maps, but (2) syntactic relations are missed in embeddings but captured by text self-attention maps. Our method enabled cross-attention to better capture syntactic structure, significantly improving text-to-image fidelity across a range of sentence structures, without requiring external resources.

\newpage
{
    \small
    \bibliographystyle{ieeenat_fullname}
    \bibliography{main}
}

\clearpage
\setcounter{page}{1}
\appendix
\setcounter{table}{0}
\setcounter{equation}{0}
\setcounter{figure}{0}
\renewcommand{\thetable}{\Alph{table}}
\renewcommand{\thefigure}{\Alph{figure}}
\maketitlesupplementary
\newtheorem{definition}{Definition}
\newtheorem{corollary}{Corollary}
\newtheorem{theorem}{Theorem}

\newcommand{\mean}[1]{\mathbb{E}[#1]}
\section{Proofs}
\subsection{Notation}
\begin{table}[ht]
\centering
\begin{tabular}{l p{15cm}} 
\toprule
\textbf{Symbol} & \textbf{Definition and Properties} \\ 
\midrule
${\Dc}$ & embedding dimension per head in cross-attention layers\\
$\Hc$ & number of heads in cross-attention layers\\
$\Nc$ &query sequence length in cross-attention layers\\
$\De$ &embedding dimension per head in text encoder\\
$\He$ &number of heads in text encoder\\
$s$ & text sequence length\\
$\mathbf{q}^{(\ell)}_a$ & $\in\rr{\Hc\Dc}, a = 1,\cdots,\Nc$, query vectors at layer $\ell$ cross-attention layer  \\
$\mathbf{k}^{(\ell)}_i$ & $\in\rr{\He\De}$, $i = 1,\cdots, s$, text embeddings  \\
$W^{(\ell,h)}_{\text{c}}$ & $\in\rr{\Hc\Dc\times \Hc\Dc}$, projection parameter matrix in cross-attention layer $\ell$ and head $h$. It is related to the product key and query projection parameters ($\in\mathbb{R}^{\Hc\times\Dc\times\Hc\Dc}$) via $W^{(\ell,h)}_{\text{c}}=W^{(\ell,h)\top}_{\text{q}} W^{(\ell,h)}_{\text{k}}$\\
$W^{(\ell,h)}_{\text{v}}$ &$\in\rr{\De\times \He\De}$, value projection matrix in text encoder self-attention layer $\ell$ and head $h$\\
$W^{(\ell)}_{\text{out}}$ &$\in\rr{\He\De\times \He\De}$, out projection matrix in text encoder self-attention layer $\ell$\\
$A^{(\ell,h)}$ &  $\in \rr{\Nc\times s}$, cross-attention maps at layer $\ell$ and head $h$. The elements are denoted by $A^{(\ell,h)}_{ai}$ \\
$\mathbf{e}^{(\ell)}_i$ & $\in\rr{\He\De}$, $i = 1,\cdots, s$, text dense vectors  in the text-encoder layer $\ell$ \\
$T^{(\ell,h)}$& $\in \mathbb{R}^{s\times s}$, text self-attention matrix at layer $\ell$ and head $h$ of the text encoder\\
$\epsilon$ &$\ll 1$, the inverse ratio of \texttt{<bos>} attention weight to the sum of attention weights of the rest of the sequence\\
\bottomrule
\end{tabular}
\caption{Table of Notations}
\end{table}

\subsection{The Big O Notation}
Based on the empirical observations, we consider the situation where \emph{attention sink} occurs both in the text encoder and in the cross-attention layers of the diffusion model:  the attention weights for the \bos token are much higher than the rest of the sequence:
\begin{align}
\text{self-attention in the text encoder :}\quad\frac{\sum^{i}_{j\neq 1}T^{(\ell,h)}_{ij}}{T^{(\ell,h)}_{i1}}&<\epsilon \ll 1, \qquad i=2,\cdots,s,\\
\text{cross-attention in the diffusion model:}\quad\frac{\sum^{s}_{j\neq 1}A^{(\ell,h)}_{aj}}{A^{(\ell,h)}_{a1}}&<\epsilon \ll 1, \qquad a=1,\cdots,\Nc.
\end{align}
In practice $\epsilon\sim 0.1$ or smaller in the middle layers of U-Net and the later layers of CLIP text encoder. Our approach for calculating the approximate quantities in the limit of small $\epsilon$ is \emph{perturbation theory}: we assume that the variables of the problem, such as $T^{(\ell,h)}$ can be written as a power series in a small parameter $\epsilon$:
\begin{equation}
    T^{(\ell,h)}= \sum^\infty_{n=0}  T^{(\ell,h)(0)}+  \epsilon T^{(\ell,h)(1)}+ \epsilon^2 T^{(\ell,h)(2)}+\cdots.
\end{equation}
In this context, $\ord{\epsilon}$ mean terms that are linear or higher order in $\epsilon$. If $\epsilon$ is sufficiently small, the first few term give a good approximation to the true variable.

\subsection{Proof of Proposition 1}
In the following, we suppress the $(\ell,h)$ superscript in quantities $A^{(\ell,h)},\Sigma^{(\ell,h)},\mu^{(\ell,h)},W_{\text{c}}^{(\ell,h)}, \mathbf{k}_{i}^{(\ell)},\mathbf{q}_{i}^{(\ell)}$ defined below to reduce clutter.
\setcounter{propx}{0}
Consider the similarity matrix of the form:
    \begin{equation}
       \cos({A}_{ai},
       A_{aj}):=\frac{\sum_{a=1}^{\Nc} {A}_{ai} A_{aj}}{\big(\sum_{a=1}^{\Nc}A^{(\ell,h)2}_{ai}\big)^{\frac12}\big(\sum_{a=1}^{\Nc}A^{(\ell,h)2}_{aj}\big)^{\frac12}}, 
    \end{equation}
where
\begin{align}
    A_{ai} := \frac{\exp(\Omega_{ai})}{\sum_{j=1}^s\exp(\Omega_{aj})},\;
    \Omega_{ai}:=\mathbf{q}_a^{\top} W^{}_\text{c} \mathbf{k}^{}_i.
\end{align}

\begin{prop}
    If $A\in\mathbb{R}^{ \Nc\times s}$ is a cross-attention map defined in eq. \eqref{eq:cross_attn}, then under the assumptions i, ii, 
 and iii described below, the similarity matrix can be written in terms of key vectors $\mathbf{k}_i\in \mathbb{R}^{\Hc\Dc}$ as
\begin{align}
    \cos(A_i,A_j)= 
   \;\; \exp\Big(-\frac{1}{2}(\mathbf{k}_i-\mathbf{k}_j)^\top  W^2(\mathbf{k}_i-\mathbf{k}_j)\Big),
\end{align}
up to terms of at least $\mathcal{O}(1/\sqrt{\Nc})$ and $\mathcal{O}(\epsilon)$, where $W^2 :=W^\top_\text{c} \Sigma W_\text{c}$ and $\Sigma^{(\ell)}\in\mathbb{R}^{\Hc\Dc \times \Hc\Dc}$ is the covariance matrix of query vectors and $W_\text{c}\in\mathbb{R}^{\Hc\Dc \times \Hc\Dc}$ is a parameter.
\end{prop}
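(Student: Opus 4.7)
The plan is to reduce $\cos(A_i,A_j)$ to a ratio of three empirical averages over the query index $a$, pass from those averages to expectations over the query distribution via concentration, and then close the form using the Gaussian moment generating function. First, I would apply the attention-sink assumption: dividing numerator and denominator of the softmax in the definition of $A_{ai}$ by $\exp(\Omega_{a1})$ and using $\sum_{k>1}\exp(\Omega_{ak}-\Omega_{a1})=\mathcal{O}(\epsilon)$ gives $A_{ai}=\exp(\Omega_{ai}-\Omega_{a1})\,(1+\mathcal{O}(\epsilon))$ for $i>1$ and $A_{a1}=1+\mathcal{O}(\epsilon)$. Substituting into the cosine yields
\begin{equation*}
\cos(A_i,A_j) \;\approx\; \frac{\sum_a \exp\!\big(\mathbf{q}_a^\top W_\text{c}(\mathbf{k}_i+\mathbf{k}_j-2\mathbf{k}_1)\big)}{\sqrt{\sum_a \exp\!\big(2\mathbf{q}_a^\top W_\text{c}(\mathbf{k}_i-\mathbf{k}_1)\big)\,\sum_a \exp\!\big(2\mathbf{q}_a^\top W_\text{c}(\mathbf{k}_j-\mathbf{k}_1)\big)}},
\end{equation*}
up to a multiplicative $(1+\mathcal{O}(\epsilon))$ factor; the same expression also handles $i=1$ or $j=1$ under the convention $\mathbf{k}_1-\mathbf{k}_1=0$.

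Next, each of the three sums has the form $\sum_{a=1}^{\Nc} \exp(\mathbf{v}^\top\mathbf{q}_a)$, and by a Chebyshev/law-of-large-numbers argument over the $\Nc$ query samples one obtains $\tfrac{1}{\Nc}\sum_a \exp(\mathbf{v}^\top\mathbf{q}_a)=\mathbb{E}[\exp(\mathbf{v}^\top\mathbf{q})]\,(1+\mathcal{O}(1/\sqrt{\Nc}))$, so the $\Nc$ prefactors cancel between numerator and denominator. Under the Gaussian assumption $\mathbf{q}\sim\mathcal{N}(\mu,\Sigma)$ supplied by the remaining assumptions, the moment generating function gives $\mathbb{E}[\exp(\mathbf{v}^\top\mathbf{q})]=\exp(\mathbf{v}^\top\mu+\tfrac12\mathbf{v}^\top\Sigma\mathbf{v})$. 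Writing $\mathbf{v}_i:=2W_\text{c}(\mathbf{k}_i-\mathbf{k}_1)$ and $\mathbf{v}_j:=2W_\text{c}(\mathbf{k}_j-\mathbf{k}_1)$, the numerator's shift is $\tfrac12(\mathbf{v}_i+\mathbf{v}_j)$, so the $\mu$-linear pieces cancel exactly between the numerator and the square-rooted denominator, leaving only the quadratic combination
\begin{equation*}
\tfrac12\Big(\tfrac{\mathbf{v}_i+\mathbf{v}_j}{2}\Big)^{\!\top}\Sigma\Big(\tfrac{\mathbf{v}_i+\mathbf{v}_j}{2}\Big) - \tfrac14\mathbf{v}_i^\top\Sigma\mathbf{v}_i - \tfrac14\mathbf{v}_j^\top\Sigma\mathbf{v}_j \;=\; -\tfrac18(\mathbf{v}_i-\mathbf{v}_j)^\top\Sigma(\mathbf{v}_i-\mathbf{v}_j).
\end{equation*}
Substituting $\mathbf{v}_i-\mathbf{v}_j=2W_\text{c}(\mathbf{k}_i-\mathbf{k}_j)$ then produces the claimed exponent $-\tfrac12(\mathbf{k}_i-\mathbf{k}_j)^\top W^2(\mathbf{k}_i-\mathbf{k}_j)$ with $W^2=W_\text{c}^\top\Sigma W_\text{c}$; pleasantly, $\mathbf{k}_1$ drops out, which is consistent with its role as an attention sink.

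The main obstacle I expect is the honest bookkeeping of the relative error after division. Each empirical sum has relative error $\mathcal{O}(1/\sqrt{\Nc})$, but the sums themselves scale like $\exp(\tfrac12\mathbf{v}^\top\Sigma\mathbf{v})$, which can be large; to guarantee a clean $\mathcal{O}(1/\sqrt{\Nc})$ ratio error after dividing and square-rooting, I would incorporate into the assumptions a boundedness condition such as $\|\Sigma^{1/2}W_\text{c}(\mathbf{k}_i-\mathbf{k}_1)\|=\mathcal{O}(1)$ so that the variance of $\exp(\mathbf{v}^\top\mathbf{q})$ stays controlled and Chebyshev does not blow up. Composing the $(1+\mathcal{O}(\epsilon))$ factor from the attention-sink step with the $(1+\mathcal{O}(1/\sqrt{\Nc}))$ concentration factor then yields the stated additive error. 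A secondary subtlety, easily handled by Taylor expansion, is turning the final multiplicative $(1+\mathcal{O}(\epsilon)+\mathcal{O}(1/\sqrt{\Nc}))$ error on a quantity of order one into the additive form claimed in the statement.
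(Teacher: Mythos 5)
Your proposal is correct and follows essentially the same route as the paper: approximate each $A_{ai}$ by $e^{\Omega_{ai}-\Omega_{a1}}$ via the attention sink, pass from empirical sums over $a$ to expectations by concentration, and close the form with the Gaussian moment-generating function, with $\mathbf{k}_1$ cancelling in the final algebra exactly as you describe. The only cosmetic difference is ordering — the paper invokes the CLT first and then applies the sink approximation inside the expectation via a high-probability ($1-\epsilon^3$) argument over query space, which is precisely its answer to the error-bookkeeping concern you flag at the end.
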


\begin{proof}
If queries are iid samples of some distribution $\mathbf{q}_a^{(\ell)}\sim p_{\mathbf{q}}$ with finite mean and variance, we can use the Central Limit Theorem to write the cosine similarity as 
   \begin{align}\label{cos2}
       \cos(A_i,A_j)&:=\frac{\mathbb{E} [{A}_{ai} A_{aj}]+\ord{\frac{1}{\sqrt{N_{\text{c}}}}}}{\big(\mathbb{E}[A^{(\ell,h)2}_{ai}]+\ord{\frac{1}{\sqrt{N_{\text{c}}}}}\big)^{\frac12}\big(\mathbb{E}[A^{(\ell,h)2}_{aj}]+\ord{\frac{1}{\sqrt{N_{\text{c}}}}}\big)^{\frac12}},\\
       &=\frac{\mathbb{E} [{A}_{ai} A_{aj}]}{\big(\mathbb{E}[A^{2}_{ai}]\big)^{\frac12}\big(\mathbb{E}[A^{2}_{aj}]\big)^{\frac12}}+\ord{\frac{1}{\sqrt{N_{\text{c}}}}}.
    \end{align}

\begin{assumptionx}
    The query sequence length $\Nc$ is large enough so that deviations from true mean can be approximated by the first term in $1/\Nc$ expansion, and the corrections from the dependence between samples appear at higher orders in the expansion.
\end{assumptionx}
As a first approximation to the distribution of queries, consider a statistical model where the query vectors are jointly normally distributed:

\begin{equation}\label{q dist}
    \mathbf{q}_a\sim\mathcal{N}(\mu,\Sigma),\qquad \mu\in \rr{\Hc\Dc},\, \Sigma\in \rr{\Hc\Dc\times \Hc\Dc}.
\end{equation}
This is strictly true at the first denoising step. Moreover, if the true distribution remains close to Gaussian, the corrections from the distribution can  in principle be perturbatively calculated and added accordingly. Therefore, the assumption above may not be interpreted as a restriction, but as a first (and good) approximation to the true distribution.
\begin{assumptionx}
    Query vectors $\mathbf{q}_{a}\in\mathbb{R}^{\Hc\Dc}$ are jointly Gaussian as in \eqref{q dist}.
\end{assumptionx}

Note that the \emph{attention scores} $\Omega_{ai} = \mathbf{q}_a^{\top} W_\text{c} \mathbf{k}_i$ are now gaussian variables with
\begin{equation}
    \mean{\Omega_{ai}} = \mu^\top W_{\text{c}}\mathbf{k}_i:=\mu_i,\qquad \text{Var}[\Omega_{ai}] = \mathbf{k}^\top_iW^\top_{\text{c}}W_{\text{c}}\mathbf{k}_i:=\sigma^2_{i}.
\end{equation}
\begin{assumptionx}\label{assump stat}
    We empirically observe that i) $\mu_1\gg \mu_i,\, ii) \mu_1\gg \sigma_1, \,iii) \sigma_1\approx \sigma_i$ for $,i=2,\cdots,s$, such that
    \begin{equation}
        e^{\mu_i-\mu_1}\sim\ord{\epsilon}.
    \end{equation}
\end{assumptionx}
Writing cross-attention probabilities in terms of attention scores, we have
\begin{equation}
    A_{ai} =   \frac{e^{\Omega_{ai}}}{e^{\Omega_{a1}}+\sum_{m=2}^se^{\Omega_{am}}}= \frac{e^{\Omega_{ai}-\Omega_{a1}}}{1+\sum_{m=2}^se^{\Omega_{am}-\Omega_{a1}}}.
\end{equation}

Note that since attention scores are Gaussian,
\begin{equation}
    \mathbb{P}[e^{\Omega_{ai}-\Omega_{a1}}<\epsilon/s]=\Phi\big(\frac{\log(\epsilon/s)-\mu_i+\mu_1}{\sqrt{\sigma_1^2+\sigma_i^2}}\big).
\end{equation}
Therefore, if assumption  \ref{assump stat} holds, for some large enough $\mu_1$, we can have $\mathbb{P}[e^{\Omega_{ai}-\Omega_{ai}}<\epsilon/s]>1-\epsilon^3$. This means that the sum of attention probabilities of all non-\texttt{<bos>} tokens does not exceed $\epsilon$ with the probability of at least $1-\epsilon^3$. As a result, we have
\begin{align}
    A_{ai} &= e^{\Omega_{ai}-\Omega_{a1}}+\ord{\epsilon^2},\\
    A_{ai}A_{aj}& = e^{\Omega_{ai}+\Omega_{aj}-2\Omega_{a1}}+\ord{\epsilon^3},\label{approx}
\end{align}
with high probability. To evaluate the cosine similarity, we need to compute expectations:

\begin{align}
    \frac{\mathbb{E} [{A}_{ai} A_{aj}]}{\big(\mathbb{E}[A^{2}_{ai}]\big)^{\frac12}\big(\mathbb{E}[A^{2}_{aj}]\big)^{\frac12}}&=
    \frac{\mathbb{E} [e^{\Omega_{ai}+\Omega_{aj}}]+\ord{\epsilon^3}}{\big(\mathbb{E}[e^{2\Omega_{ai}}]+\ord{\epsilon^3}\big)^{\frac12}\big(\mathbb{E}[e^{2\Omega_{aj}}]+\ord{\epsilon^3}\big)^{\frac12}}.
\end{align}
We can evaluate this expression using the well-known formula of the moment-generating function of Gaussian distribution:
\begin{lemma}\label{gauss}
    If $\mathbf{q}_a\sim\mathcal{N}(\mu,\Sigma)$ and $\mathbf{r}\in\mathbb{R}^{\Hc\Dc}$, then
    \begin{equation}
        \mean{e^{\mathbf{q}\cdot \mathbf{r}}} = \exp(\mathbf{r}\cdot\mu+\frac{1}{2}\mathbf{r}\cdot \Sigma \cdot\mathbf{r}).
    \end{equation}
\end{lemma}
Define $\mathbf{r}_{ij} = W_\text{c} (\mathbf{k}_i+\mathbf{k}_j-2\mathbf{k}_1)$:
\begin{equation}\label{exp by r}
   \frac{\mathbb{E} [{A}_{ai} A_{aj}]}{\big(\mathbb{E}[A^{2}_{ai}]\big)^{\frac12}\big(\mathbb{E}[A^{2}_{aj}]\big)^{\frac12}}   = \frac{\exp(\mu^\top \mathbf{r}_{ij}+\frac{1}{2}\mathbf{r}_{ij}^\top\Sigma \mathbf{r}_{ij})}{\exp(\frac12\mu^\top \mathbf{r}_{ii}+\frac{1}{4}\mathbf{r}_{ii}\Sigma\mathbf{r}_{ii})\exp(\frac12\mu^\top \mathbf{r}_{jj}+\frac{1}{4}\mathbf{r}_{jj}\Sigma\mathbf{r}_{jj})}+\ord{\epsilon}.
\end{equation}
Here, we used the fact that each of exponentials are $\sim\ord{\epsilon^2}$ to simplify the correction terms to $\ord{\epsilon}$. When applying Lemma \ref{gauss}, one might worry that the integration includes regions of $\mathbb{R}^{\Hc\Dc}$ that the approximation \eqref{approx} fails. Although this is a valid point, the total probability of such regions is $\epsilon^3$ by assumption,  which is at the order of correction terms.

Simplifying \eqref{exp by r} gives
\begin{equation}
   \frac{\mathbb{E} [{A}_{ai} A_{aj}]}{\big(\mathbb{E}[A^{2}_{ai}]\big)^{\frac12}\big(\mathbb{E}[A^{2}_{aj}]\big)^{\frac12}}   = \exp(\frac{1}{2}\mathbf{r}_{ij}^\top\Sigma \mathbf{r}_{ij}-\frac{1}{4}\mathbf{r}_{ii}^\top\Sigma \mathbf{r}_{ii}-\frac{1}{4}\mathbf{r}_{jj}^\top\Sigma \mathbf{r}_{jj})+\ord{\epsilon}.
\end{equation}
Substituting the definition of $\mathbf{r}_{ij},$ we get
\begin{equation}
   \frac{\mathbb{E} [{A}_{ai} A_{aj}]}{\big(\mathbb{E}[A^{2}_{ai}]\big)^{\frac12}\big(\mathbb{E}[A^{2}_{aj}]\big)^{\frac12}}   = \exp(-\frac{1}{2}(\mathbf{k}_i-\mathbf{k}_j)^\top W^\top_\text{c} \Sigma W_\text{c} (\mathbf{k}_i-\mathbf{k}_j)+\ord{\epsilon}.
\end{equation}

\end{proof}

\subsection{Proof of Proposition 2}
\begin{prop} Consider a self-attention layer with output $\mathbf{o}^{(\ell,h)}_i\in\mathbb{R}^{\De}$ defined as
\begin{align} 
\mathbf{o}^{(\ell,h)}_i &= \sum_{j=1}^i T^{(\ell,h)}_{ij}W_{\text{v}}^{(\ell,h)} \mathbf{e}^{(\ell)}_j,
\end{align} 
where $W_{\text{v}}^{(\ell,h)}\in\mathbb{R}^{\De\times\He\De}$ is a parameter, $ \mathbf{e}_i \in \mathbb{R}^{\He\De}$, and $T_{ij}^{(\ell,h)}$ is the self-attention matrix. 
    Define $R\in\mathbb{R}^{s\times s}$ as
    \begin{equation}
        R_{ij}=  \mathbf{e}^{(\ell)\top}_iW_{\text{v}}^{(\ell,h)\top} W_{\text{v}}^{(\ell,h)} \mathbf{e}^{(\ell)}_j,
    \end{equation}
and suppose it has the property
\begin{equation}
    \frac{|R_{mn}|}{R_{11}}\sim\mathcal{O}(1/\epsilon),\qquad \frac{|R_{1m}|}{R_{11}}\sim\mathcal{O}(1),\quad 1<m,n\leq s,
\end{equation}
where $\mathbf{e}^{(\ell)}_1$ is the bos embedding. Then the following holds:
\begin{equation}
    \cos(\mathbf{o}^{(\ell,h)}_i,\mathbf{o}^{(\ell,h)}_j)=1-\mathcal{O}(\epsilon).
\end{equation}
\end{prop}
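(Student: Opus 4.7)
The plan is to separate the \bos contribution from the rest and exploit the attention-sink condition to control both the inner product in the numerator and the norms in the denominator of the cosine similarity to leading order in $\epsilon$. Write the output as
\begin{equation}
\mathbf{o}^{(\ell,h)}_i = T^{(\ell,h)}_{i1}\,\mathbf{v}_1 + \mathbf{u}_i,\qquad \mathbf{v}_1 := W^{(\ell,h)}_{\text{v}}\mathbf{e}^{(\ell)}_1,\qquad \mathbf{u}_i := \sum_{j=2}^{i} T^{(\ell,h)}_{ij}\, W^{(\ell,h)}_{\text{v}}\mathbf{e}^{(\ell)}_j.
\end{equation}
Because attention weights sum to one along the row, the sink hypothesis $\sum_{j\neq 1}T^{(\ell,h)}_{ij}/T^{(\ell,h)}_{i1}<\epsilon$ yields $T^{(\ell,h)}_{i1}=1-\mathcal{O}(\epsilon)$ and $\sum_{j\geq 2}T^{(\ell,h)}_{ij}=\mathcal{O}(\epsilon)$ uniformly in $i$. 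These two facts are the only structural properties of the attention matrix I will use.

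Next I would expand both the inner product $\mathbf{o}^{(\ell,h)\top}_i\mathbf{o}^{(\ell,h)}_j$ and the self-inner products $\|\mathbf{o}^{(\ell,h)}_i\|^2$ in terms of the matrix $R_{mn}=\mathbf{e}^{(\ell)\top}_m W^{(\ell,h)\top}_{\text{v}}W^{(\ell,h)}_{\text{v}}\mathbf{e}^{(\ell)}_n$. For the inner product, this produces four groups of terms, which I would bound individually using the assumed scalings $|R_{1m}|/R_{11}=\mathcal{O}(1)$ and $|R_{mn}|/R_{11}=\mathcal{O}(1/\epsilon)$ for $m,n>1$:
\begin{align}
\mathbf{o}^{(\ell,h)\top}_i\mathbf{o}^{(\ell,h)}_j &= T^{(\ell,h)}_{i1}T^{(\ell,h)}_{j1}R_{11} + T^{(\ell,h)}_{i1}\!\!\sum_{m=2}^{j}\!T^{(\ell,h)}_{jm}R_{1m} + T^{(\ell,h)}_{j1}\!\!\sum_{n=2}^{i}\!T^{(\ell,h)}_{in}R_{n1} + \!\!\sum_{n=2}^{i}\sum_{m=2}^{j}\!T^{(\ell,h)}_{in}T^{(\ell,h)}_{jm}R_{nm}.
\end{align}
The first term is $R_{11}(1-\mathcal{O}(\epsilon))$; the two middle mixed terms are each $\mathcal{O}(\epsilon\, R_{11})$ because one attention factor is $\mathcal{O}(1)$ and the other sums to $\mathcal{O}(\epsilon)$; the last term is also $\mathcal{O}(\epsilon\, R_{11})$ because the potentially large factor $R_{nm}=\mathcal{O}(R_{11}/\epsilon)$ is suppressed by two $\mathcal{O}(\epsilon)$ weight sums. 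Hence $\mathbf{o}^{(\ell,h)\top}_i\mathbf{o}^{(\ell,h)}_j = R_{11}\bigl(1+\mathcal{O}(\epsilon)\bigr)$, and the identical argument with $j=i$ gives $\|\mathbf{o}^{(\ell,h)}_i\|^2 = R_{11}\bigl(1+\mathcal{O}(\epsilon)\bigr)$.

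Finally, I would divide: using a first-order Taylor expansion of $(1+\mathcal{O}(\epsilon))^{-1/2}$ twice, the ratio becomes $1+\mathcal{O}(\epsilon)$, and since cosine is bounded above by $1$ the error is necessarily non-positive, giving $\cos(\mathbf{o}^{(\ell,h)}_i,\mathbf{o}^{(\ell,h)}_j)=1-\mathcal{O}(\epsilon)$. The main obstacle I anticipate is precisely the fourth term in the inner-product expansion: the entries $R_{nm}$ for $n,m>1$ are allowed to be as large as $\mathcal{O}(R_{11}/\epsilon)$, so naive bounds would lose the smallness in $\epsilon$. The crucial point is that this term is always weighted by two independent attention sums over non-\bos tokens, each of which is $\mathcal{O}(\epsilon)$, so the product is $\mathcal{O}(\epsilon^2 \cdot R_{11}/\epsilon)=\mathcal{O}(\epsilon R_{11})$. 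Verifying this cancellation carefully, and checking that the same bookkeeping applies to $\|\mathbf{o}^{(\ell,h)}_i\|^2$ so that the $\mathcal{O}(\epsilon)$ corrections in numerator and denominator combine into a single $\mathcal{O}(\epsilon)$ deviation from unity, is the only delicate step in the argument.
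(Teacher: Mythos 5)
Your proposal is correct and follows essentially the same route as the paper's proof: the same separation of the \texttt{<bos>} contribution into four groups, the same order bookkeeping showing the non-\texttt{<bos>} double sum contributes only $\mathcal{O}(\epsilon R_{11})$ because the two $\mathcal{O}(\epsilon)$ attention weight sums overpower the $\mathcal{O}(R_{11}/\epsilon)$ entries of $R$, and the same Taylor expansion of the normalizing factors. The only (minor, and arguably cleaner) difference is at the end: you pin the sign of the $\mathcal{O}(\epsilon)$ correction by the trivial bound $\cos\le 1$, whereas the paper invokes positive semi-definiteness of $R$ together with the Cauchy--Schwarz inequality to show the correction term is negative.
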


\begin{proof}
In terms of the matrix $R$, we have

\begin{equation}
 \cos(\mathbf{o}^{(\ell,h)}_i,\mathbf{o}^{(\ell,h)}_j)=\frac{\mathbf{o}^{(\ell,h)}_i\cdot\mathbf{o}^{(\ell,h)}_j)}{\|\mathbf{o}^{(\ell,h)}_i\|\|\mathbf{o}^{(\ell,h)}_j\|} =\frac{\sum_{m=1}^i\sum_{n=1}^j  T^{(\ell,h)}_{im}T^{(\ell,h)}_{jn}R_{mn}}{\big(\sum_{m=1}^i\sum_{n=1}^j  T^{(\ell,h)}_{im}T^{(\ell,h)}_{jn}R_{mn}\big)^\frac12\big(\sum_{m=1}^i\sum_{n=1}^j  T^{(\ell,h)}_{im}T^{(\ell,h)}_{jn}R_{mn}\big)^{\frac12}}.
\end{equation}
By separating the contributions from the \texttt{<bos>} token, we can write the numerator as
\begin{equation}
    \sum_{m=1}^i\sum_{n=1}^j  T^{(\ell,h)}_{im}T^{(\ell,h)}_{jn}R_{mn}=T^{(\ell,h)}_{i1}T^{(\ell,h)}_{j1}R_{11}+\sum_{m'=2}^iT^{(\ell,h)}_{im'}T^{(\ell,h)}_{j1}R_{m'1}+\sum_{n'=2}^jT^{(\ell,h)}_{i1}T^{(\ell,h)}_{jn'}R_{1n'}+\sum_{m'=2}^i\sum_{n'=2}^jT^{(\ell,h)}_{im'}T^{(\ell,h)}_{jn}R_{m'n'},
\end{equation}
where $m',n' =2,\cdots,s$. Using $T^{(\ell,h)}_{im'}\sim\ord{\epsilon}$, $T^{(\ell,h)}_{i1}= 1-\ord{\epsilon}$and the conditions on the $R$ matrix stated in Proposition 2, we can show that the first term (which is necessarily positive) has a larger norm than the rest:
\begin{equation}
   \sum_{m=1}^i\sum_{n=1}^j T^{(\ell,h)}_{im}T^{(\ell,h)}_{jn}R_{mn}= R_{11} +\ord{\epsilon}.
\end{equation}
We can use this property to perform a Taylor expansion in the denominator and keep up to the linear term in $\epsilon$.
At this order of approximation, we can use $\frac{1}{1+x}\approx 1-x$ since we are keeping only the first terms in Taylor expansion. After some algebra, the expression drastically simplifies to
\begin{equation}
\cos(\mathbf{o}^{(\ell,h)}_i,\mathbf{o}^{(\ell,h)}_j)\approx 1+\sum_{m'=2}^i\sum_{n'=2}^j\Big(\frac{T^{(\ell,h)}_{im'}T^{(\ell,h)}_{jn'}}{T^{(\ell,h)}_{i1}T^{(\ell,h)}_{j1}}-\frac12
\frac{T^{(\ell,h)}_{im'}T^{(\ell,h)}_{in'}}{T^{(\ell,h)2}_{i1}}-\frac12\frac{T^{(\ell,h)}_{jm'}T^{(\ell,h)}_{jn'}}{T^{(\ell,h)2}_{j1}}\Big)
R_{m'n'}.
\end{equation}
First, note that the expression in brackets is $\sim\ord{\epsilon^2}$ and since $R_{m'n'}\sim\ord{1/\epsilon}$ (both claims from empirical evidence), the total expression will be $1+\ord{\epsilon}$. Furthermore, matrix $R$ is positive semi-definite by definition. Performing an SVD decomposition $R=U\kappa U^\top$ and absorbing the terms in brackets in $U$ matrices, one can use the Cauchy-Schwarz inequality to show that the correction term above is negative. In conclusion, we have
\begin{equation}
    \cos(\mathbf{o}^{(\ell,h)}_i,\mathbf{o}^{(\ell,h)}_j)= 1- c\epsilon+\ord{\epsilon^2},
\end{equation}
for some positive $c\sim\ord{1}$.
\end{proof}

\subsection{An extension to Proposition 2}
In this section, we extend the result of Proposition 2 by considering the full self-attention layer of the text encoder. Proposition 2 was only concerned with products of keys, queries, and values. However, a self-attention layer typically includes an output-projection linear layer and a skip connection.  Here, we explore the effect of these two components and compare the cosine similarities of input text embeddings versus those of outputs. We show that based on practical assumptions that are valid in the later layers of CLIP text encoder, the output cosine similarities are close to input cosine similarities.
Consider the output of an attention head at layer $\ell$:
\begin{align}
    \mathbf{e}_{i}^{(\ell)\text{out}} &= \mathbf{e}_{i}^{(\ell)} +W^{(\ell)}_{\text{out}}\text{concat}\big[\sum_{m=1}^i T^{(\ell,1)}_{im}W^{(\ell,1)}_\text{v}{\mathbf{e}}_{m}^{(\ell)},\cdots, \sum_{m=1}^iT^{(\ell,\He)}_{im} W^{(\ell,\He)}_\text{v}{\mathbf{e}}_{m}^{(\ell)}\big]
\end{align}
in which $\mathbf{e}_i^{(\ell)\text{out}},\mathbf{e}_i^{(\ell)}\in\mathbb{R}^{\He\De}$ for all $i=1,\cdots,s$. Here, $W^{(\ell)}_{\text{out}}\in\mathbb{R}^{\He\De\times\He\De}$ is the out-projection layer.

Define the average attention probabilities:
\begin{equation}
    \tau_i^{(\ell,h)} = \frac{1}{i}\sum_{m=2}^iT_{im}^{(\ell,h)},\qquad i =2,\cdots, s.
\end{equation}
By adding and subtracting terms proportional to $\tau_i^{(\ell,h)}$ we have
\begin{align}
    \mathbf{e}_{i}^{(\ell)\text{out}} &= \mathbf{e}_{i}^{(\ell)} +\underbrace{W^{(\ell)}_{\text{out}}\text{concat}\Big[T_{i1}^{(\ell,h)}W^{(\ell,1)}_\text{v}\mathbf{e}_1^{(\ell)}+\tau_i^{(\ell,1)}\sum_{m=2}^i W^{(\ell,1)}_\text{v}{\mathbf{e}}_m^{(\ell)},\cdots, T_{i1}^{(\ell,\He)}W^{(\ell,\He)}_\text{v}\mathbf{e}_1^{\ell}+\tau_i^{(\ell,\He)}\sum_{m=2}^iW^{(\ell,\He)}_\text{v}{\mathbf{e}}_m^{(\ell)}\Big]}_{:=\mathbf{e}_i^{\prime(\ell)}}\nonumber\\
    \qquad\qquad\qquad&+\underbrace{W^{(\ell)}_{\text{out}}\text{concat}\Big[\sum_{m=2}^i(T^{(\ell,h)}_{im}-\tau^{(\ell,1)}_i) W^{(\ell,1)}_\text{v}{\mathbf{e}}_{m}^{(\ell)}
    ,\cdots, 
    \sum_{m=2}^i(T^{(\ell,\He)}_{im}-\tau^{(\ell,\He)}_i) W^{(\ell,\He)}_\text{v}{\mathbf{e}}_{m}^{(\ell)}
    \Big]}_{:=\delta\mathbf{e}_i^{(\ell)}}.
\end{align}
We empirically observe that
\begin{equation}
    \|\mathbf{e}_i^{(\ell)}\|\sim\mathcal{O}(1/\epsilon),\qquad\|\mathbf{e}_i^{\prime(\ell)}\|\sim\mathcal{O}(1),\qquad \|\delta \mathbf{e}_i^{(\ell)}\|\sim\mathcal{O}(\epsilon).
\end{equation}
The result of the above conditions is
\begin{align}
     \big|\mathbf{e}_{i}^{(\ell)\text{out}}\cdot  \mathbf{e}_{j}^{(\ell)\text{out}}-(\mathbf{e}_i^{(\ell)}+ \mathbf{e}_i^{\prime(\ell)})\cdot (\mathbf{e}_j^{(\ell)}+\mathbf{e}_j^{\prime(\ell)})\big|&=(\mathbf{e}_i^{(\ell)}+\mathbf{e}_i^{\prime(\ell)})\cdot\delta \mathbf{e}_j^{(\ell)}+
     (\mathbf{e}_j^{(\ell)}+\mathbf{e}_j^{\prime(\ell)})\cdot\delta \mathbf{e}_i^{(\ell)}+
     \delta\mathbf{e}_i^{(\ell)}\cdot\delta \mathbf{e}_j^{(\ell)}\nonumber\\
     &\sim\ord{1}.
\end{align}
Thus, the cosine similarity will be
\begin{align}
    \cos(\mathbf{e}_i^{(\ell)\text{out}},\mathbf{e}_j^{(\ell)\text{out}}) &= \frac{(\mathbf{e}_i^{(\ell)}+\mathbf{e}_i^{\prime(\ell)})\cdot (\mathbf{e}_j^{(\ell)}+\mathbf{e}_j^{\prime(\ell)})+\ord{1}}{\Big(\|\mathbf{e}_i^{(\ell)}+\mathbf{e}_i^{\prime(\ell)}\|^2+\ord{1}\Big)^{\frac12}\Big(\|\mathbf{e}_j^{(\ell)}+\mathbf{e}_j^{\prime(\ell)}\|^2+\ord{1}\Big)^{\frac12}}\\
    &=
    \frac{(\mathbf{e}_i^{(\ell)}+\mathbf{e}_i^{\prime(\ell)})\cdot (\mathbf{e}_j^{(\ell)}+\mathbf{e}_j^{\prime(\ell)})}{\Big(\|\mathbf{e}_i^{(\ell)}+\mathbf{e}_i^{\prime(\ell)}\|^2\Big)^{\frac12}\Big(\|\mathbf{e}_j^{(\ell)}+\mathbf{e}_j^{\prime(\ell)}\|^2\Big)^{\frac12}}+\ord{\epsilon^2}\\
    &=\cos(\mathbf{e}_i^{(\ell)}+\mathbf{e}_i^{\prime(\ell)},\mathbf{e}_j^{(\ell)}+\mathbf{e}_j^{\prime(\ell)})+\ord{\epsilon^2}.
\end{align}
This result shows that the similarity between output text embeddings corrected by terms that depend only on the averages of attention probabilities ($\tau_i$).

\section{Details of the Empirical Study}
\paragraph{The experiment setup for Figure \ref{fig:tifa} in Section \ref{sec:evidence}.}

We use 144 prompts from the \textit{Animals-Objects} in \cite{chefer2023attend}, featuring two distinct objects, to address the missing object scenario illustrated in Figure \ref{fig:missing_objects_tifa}. In addition, we use 107 prompts from the same dataset, incorporating both attributes and objects, to explore the attribute binding case shown in Figure \ref{fig:attr_binding_tifa}.
To assess image correctness, we use TIFA~\cite{hu2023tifa}, which evaluates how well the generated images reflect the text prompts. In TIFA, questions (\eg, ``\textit{is there a green backpack?}'') for each text prompt are generated by GPT-3.5~\cite{gpt} and a vision-language model~\cite{li2022mplug} is used to provide binary or multiple-choice answers. A case is considered incorrect if the TIFA score for any question regarding object presence or attribute binding is incorrect. 

\paragraph{Text embeddings vs. text self-attention maps on prompts with complex sentence structure.}
To further highlight the generalizability of using the text self-attention matrix, we extend our analysis to more complex prompts, including those with \textit{relative pronouns} (\eg, who, which  etc.). Interestingly, Figure~\ref{fig:long_prompts} shows that the text self-attention matrix effectively captures the syntactic role of words like \textit{whose}, emphasizing the preceding vocabulary (e.g., \textit{valley}). In contrast to the text self-attention matrix, the text embedding similarity rarely exhibits this pattern.
In addition, Figure \ref{fig:TIFA_text_map} demonstrates the text self-attention maps for the MSCOCO captions included in TIFA benchmark.

\begin{figure}[ht]
    \centering
        \includegraphics[width=0.85\linewidth]{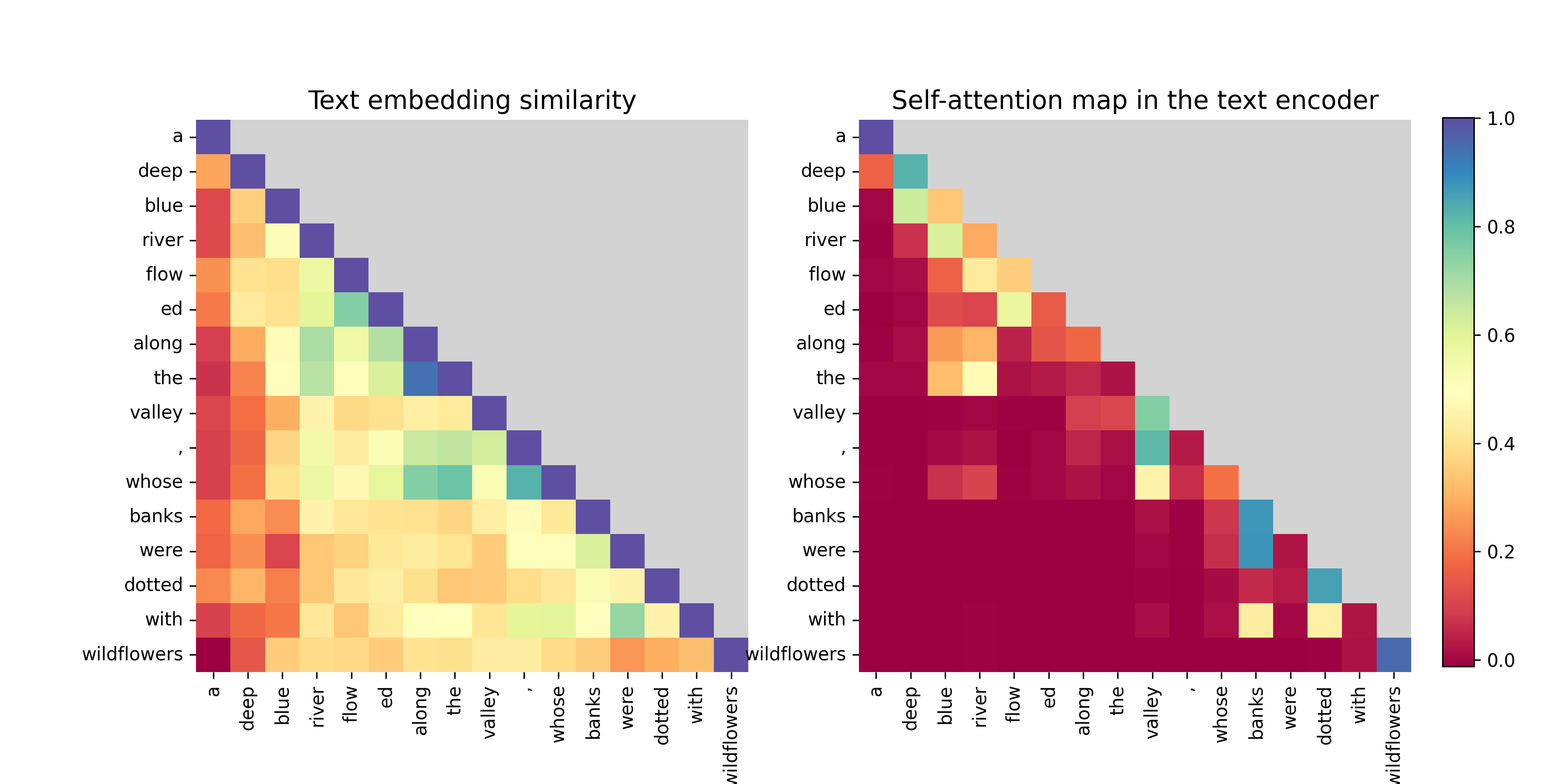} 
        \caption{Comparison of text embedding cosine similarity (left) and text self-attention maps (right) on the complex prompt, \textit{``A deep blue river flowed along the valley, whose banks were dotted with wildflowers''}. The prompt includes relative pronouns \textit{whose}.}
        \label{fig:long_prompts}
\end{figure}

\begin{figure*}[ht]
\centering
\begin{subfigure}[b]{0.45\textwidth}
    \includegraphics[width=\textwidth]{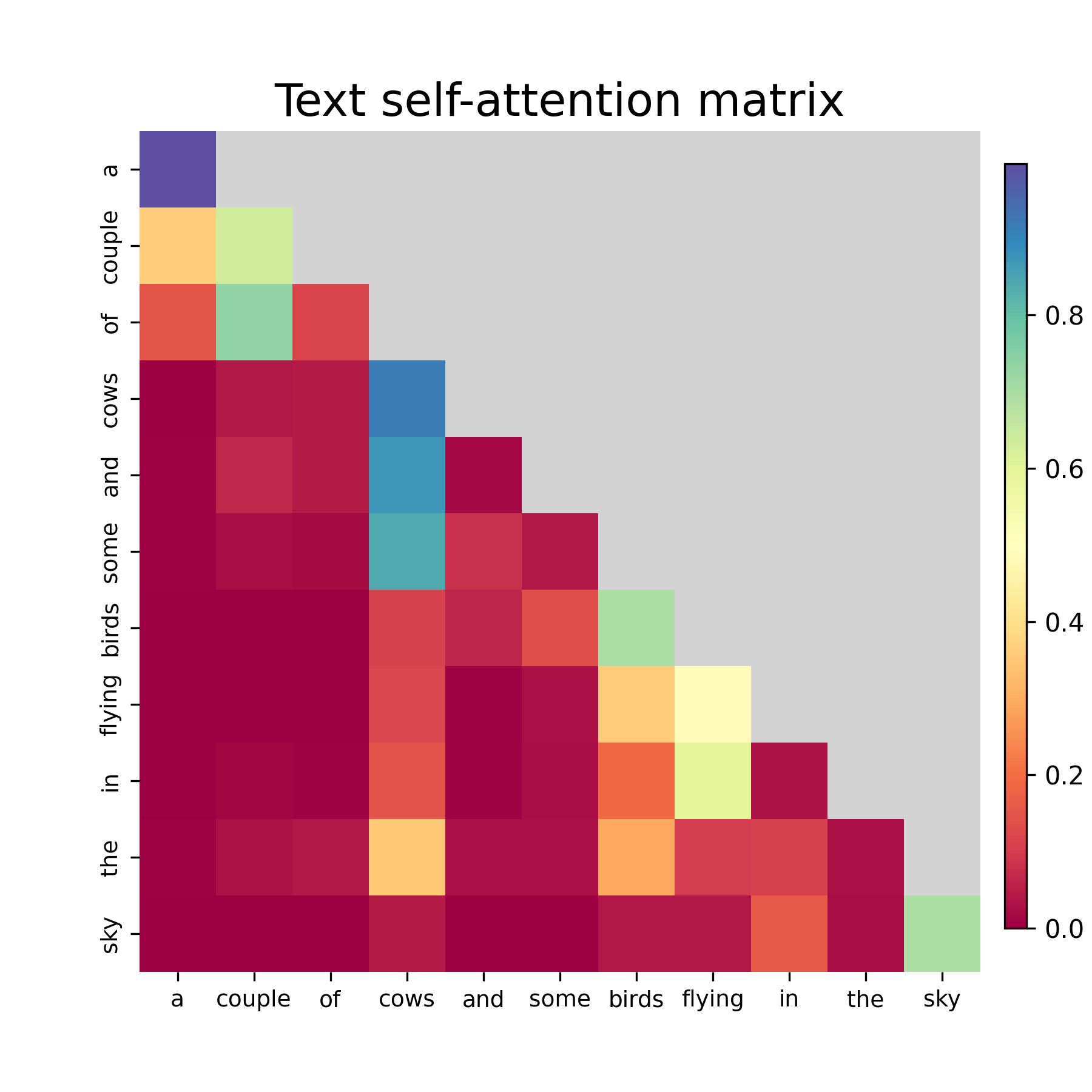}
    \caption{\textit{A couple of cows and some birds flying in the sky.}}
\end{subfigure}
\hfill
\begin{subfigure}[b]{0.45\textwidth}
    \includegraphics[width=\textwidth]{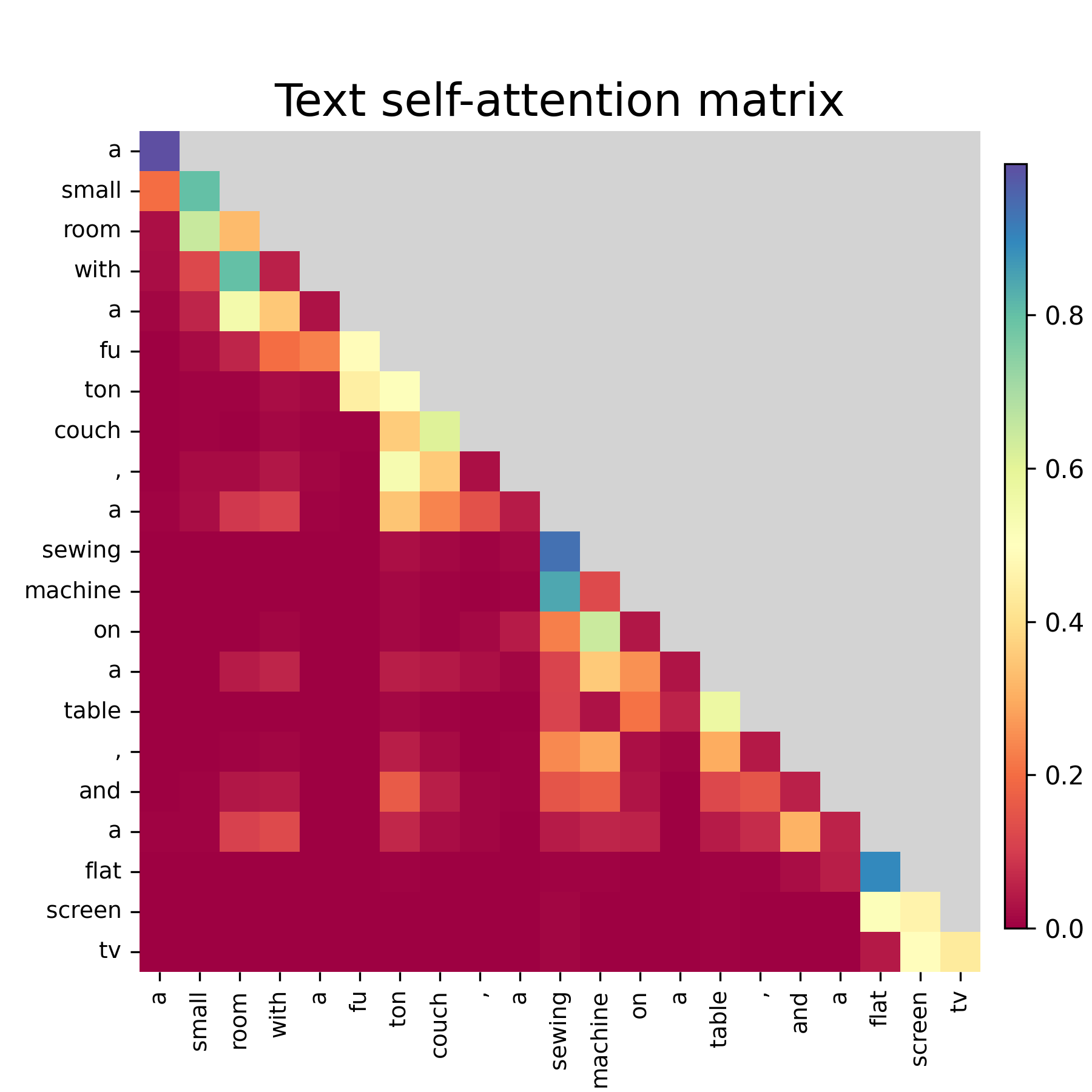}
    \caption{\textit{A small room with a futon couch, a sewing machine on a table, and a flatscreen TV.}}
\end{subfigure}

\vspace{1em}

\begin{subfigure}[b]{0.45\textwidth}
    \includegraphics[width=\textwidth]{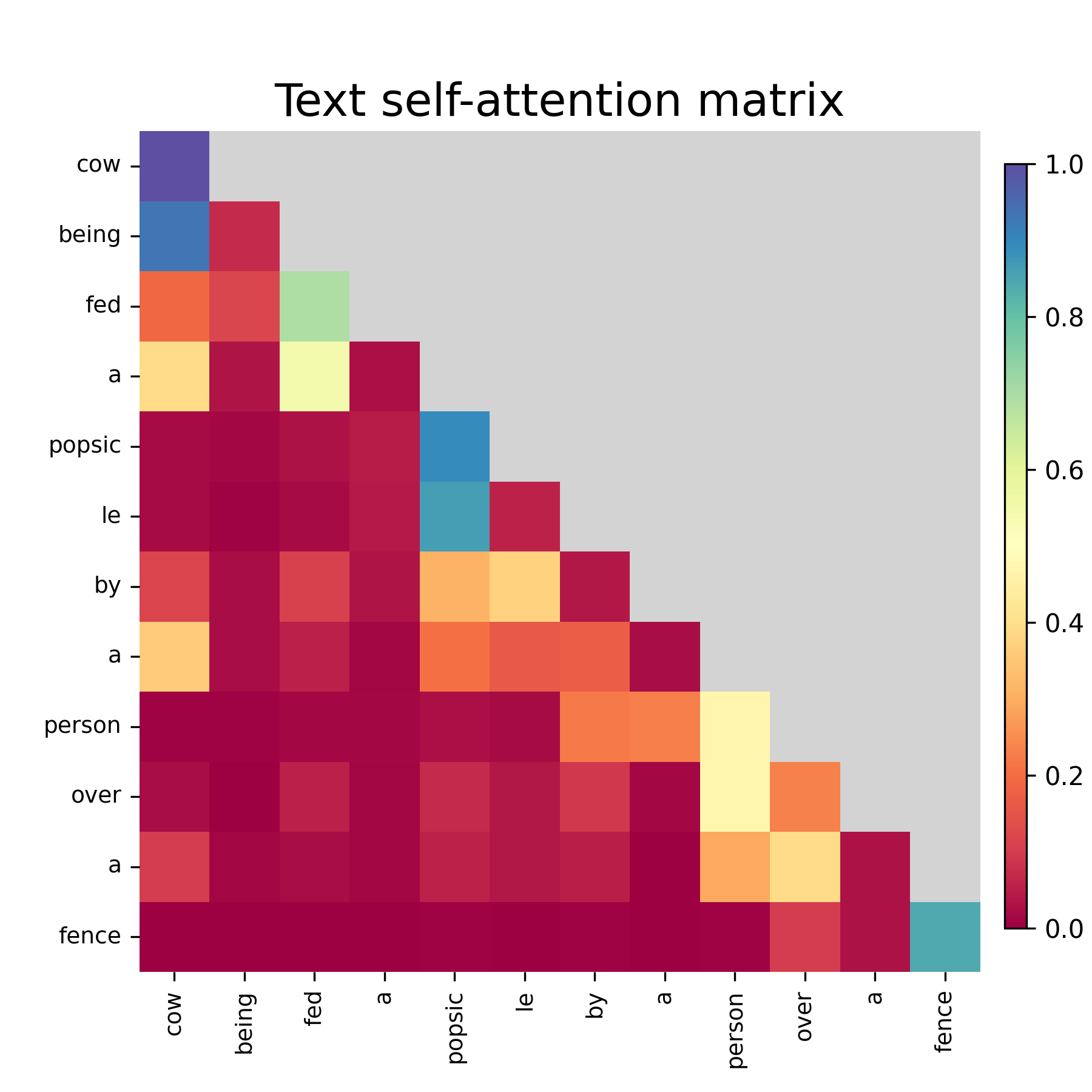}
    \caption{\textit{Cow being fed a popsicle by a person over a fence.}}
\end{subfigure}
\hfill
\begin{subfigure}[b]{0.45\textwidth}
    \includegraphics[width=\textwidth]{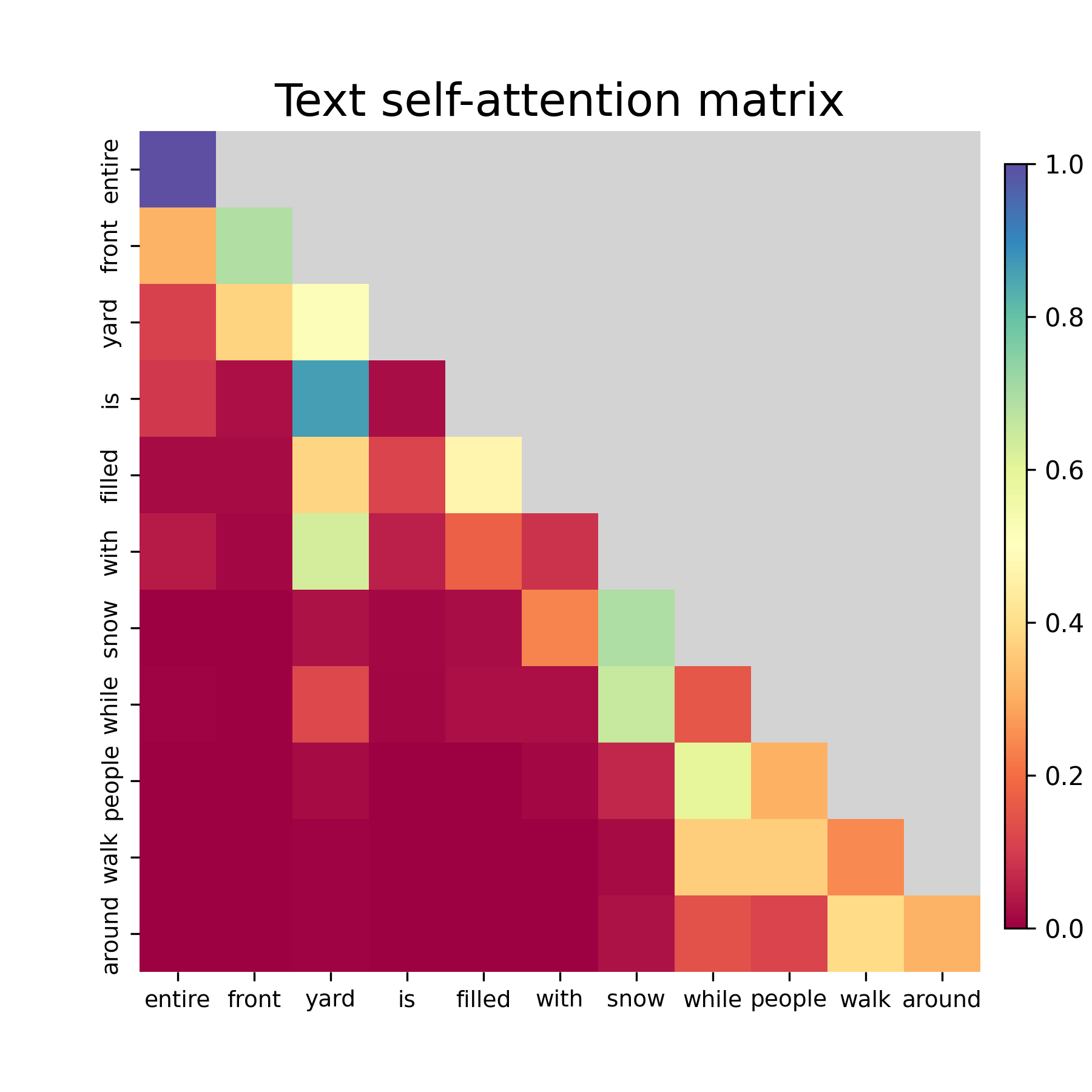}
    \caption{\textit{Entire front yard is filled with snow while people walk around.}}
\end{subfigure}
\caption{Text self-attention maps power by 3 for the MSCOCO captions included in TIFA benchmark.}
\label{fig:TIFA_text_map}
\end{figure*}

\paragraph{Experiment details in Section \ref{sec:exp}.}
Our method builds upon Stable Diffusion (SD) v1.5 \cite{rombach2022high}.
To enhance processing, we apply Gaussian smoothing to the cross-attention maps before computing cosine similarity, as discussed in \cite{chefer2023attend}. Additionally, we renormalize the text self-attention maps, excluding the \texttt{<BOS>} and \texttt{<EOS>} tokens as clarified in Eq.\eqref{eq:self_attn_renorm}. 
When computing loss function in Eq.\eqref{eq:loss_fn}, the first row, corresponding to the \texttt{<BOS>} token, is omitted from the computations. 
Regarding the prompt datasets, for \textit{Objects}, we use 66 prompts structured as ``$[\texttt{\footnotesize attribute}_1][\texttt{\footnotesize object}_1]$ \emph{and} $[\texttt{\footnotesize attribute}_2][\texttt{\footnotesize object}_2]$''. 
For \textit{Animals-Objects}, 144 prompts are employed. They are structured in two templates: ``$[\texttt{\footnotesize animal}]$ \emph{with} $[\texttt{\footnotesize object}]$'' and ``$[\texttt{\footnotesize animal}]$ \emph{and} $[\texttt{\footnotesize attribute}][\texttt{\footnotesize object}]$''. 

\paragraph{Ablation study.}
A larger $\alpha$ in our optimization process imposes stronger constraints on the latent variable ($z_t$), enhancing the regularization of cross-attention maps by aligning them more closely with the text self-attention maps. On the other hand, $\gamma$ serves as an exponent, amplifying larger values and compressing smaller ones, thereby controlling the temperature.
We conducted a grid search for $\alpha$ within the set $\{5, 10, 15, 25, 40\}$, and for $\gamma$, we explored values in $\{2, 3, 4\}$. The parameters that maximize the CLIP-full and CLIP-min similarity scores are chosen.
Based on the grid search, we ultimately selected a scale factor of $\alpha=10$ and $\gamma=4$ for \textit{Objects} and \textit{Animals-Objects}, and $\alpha=40$ and $\gamma=4$ for the TIFA benchmark, achieving the optimal balance between performance and output quality.

\end{document}